\newtheorem{lemma}{Lemma}
\newtheorem{remark}{Remark}
\newtheorem{definition}{Definition}
\newtheorem{assumption}{Assumption}
\newtheorem{theorem}{Theorem}
\newtheorem*{exmp}{Example}
\newcommand{\N}{\mathbb{N}}
\newcommand{\D}{\mathcal{D}}
\newcommand{\R}{\mathbb{R}}
\newcommand{\C}{\mathcal{C}}
\DeclareMathOperator*{\argmin}{arg\,min}
\newcommand\blfootnote[1]{%
  \begingroup
  \renewcommand\thefootnote{}\footnote{#1}%
  \addtocounter{footnote}{-1}%
  \endgroup
}
\title{Gradient Based Clustering}
\author[1]{Aleksandar Armacki}
\author[2]{Dragana Bajovic}
\author[3]{Dusan Jakovetic}
\author[1]{Soummya Kar}
\affil[1]{\small Carnegie Mellon University, Pittsburgh, PA \authorcr Email: {\tt \{aarmacki, soummyak\}@andrew.cmu.edu}}
\affil[2]{\small Faculty of Technical Sciences, University of Novi Sad, Novi Sad, Serbia \authorcr Email: {\tt dbajovic@uns.ac.rs}}
\affil[3]{\small Faculty of Sciences, University of Novi Sad, Novi Sad, Serbia \authorcr Email: {\tt dusan.jakovetic@dmi.uns.ac.rs}}
\begin{document}

\maketitle

\begin{abstract}
We propose a general approach for distance based clustering, using the gradient of the cost function that measures clustering quality with respect to cluster assignments and cluster center positions. The approach is an iterative two step procedure (alternating between cluster assignment and cluster center updates) and is applicable to a wide range of functions, satisfying some mild assumptions. The main advantage of the proposed approach is a simple and computationally cheap update rule. Unlike previous methods that specialize to a specific formulation of the clustering problem, our approach is applicable to a wide range of costs, including non-Bregman clustering methods based on the Huber loss. We analyze the convergence of the proposed algorithm, and show that it converges to the set of appropriately defined fixed points, under arbitrary center initialization. In the special case of Bregman cost functions, the algorithm converges to the set of centroidal Voronoi partitions, which is consistent with prior works. Numerical experiments on real data demonstrate the effectiveness of the proposed method.
\end{abstract}

\section{Introduction}

Clustering is a fundamental problem in unsupervized learning and is ubiquitous in various applications and domains, \cite{banerjee_anomaly}, \cite{Huber_clust}, \cite{JAIN2010651}, \cite{dhillon2003adivisive}.
 $K$-means \cite{Lloyd} is a classical and widely adopted method for clustering. For a given target number $K$ of clusters, $K$-means proceeds iteratively by alternating between two steps: 1) cluster assignment, i.e., assign each data point to its closest (in terms of the Euclidean distance) cluster; and 2) finding cluster centers, i.e., position each cluster's center at the average of the data points currently assigned to the cluster. Besides $K$-means, popular clustering methods include its improved version $K$-means++ \cite{kmeans++}, as well as $K$-modes \cite{Huang97clusteringlarge}, $K$-medians \cite{arya-k-med}, \cite{arora_kmedians}, etc.\blfootnote{The work of A. Armacki and S. Kar was partially supported by the National
Science Foundation under grant CNS-1837607. The work of D. Bajovic and
D. Jakovetic is supported by the European Union’s Horizon 2020 Research
and Innovation program under grant agreements No 957337 and 871518. This
paper reflects only the authors’ views and the European Commission cannot be
held responsible for any use which may be made of the information contained
therein. Correspondence to: Aleksandar Armacki \tt{aarmacki@andrew.cmu.edu}.}
 
 It is well-known, e.g., \cite{4767478}, that $K$-means can be formulated as a joint minimization of a loss function with respect to two groups of variables: 1) binary variables that encode cluster assignments; and 2) continuous variables that designate cluster centers, where the corresponding loss function is a squared Euclidean norm. This $K$-means representation has motivated a class of new clustering methods called Bregman clustering \cite{JMLR:v6:banerjee05b}, where the squared Euclidean norm is replaced with arbitrary Bregman divergence \cite{BREGMAN1967200}, such as Kullback-Leibler, Mahalanobis, etc.
 An appealing feature of Bregman clustering is that the introduction of a different loss (other than squared Euclidean) does not harm computational efficiency, as, despite a more involved loss function, the cluster center finding step is still akin to $K$-means,  i.e., it corresponds to computing an average vector.
 
Several relevant clustering methods have been proposed that also generalize the squared Euclidean norm of $K$-means and that \emph{do not correspond to a Bregman divergence}. For example, clustering methods based on the Huber loss~\cite{huber_loss} have been shown to exhibit good clustering performance and exhibit a high degree of robustness to noisy data, \cite{Huber_clust}, \cite{liu2019robust}. However, several challenges emerge when generalizing clustering beyond Bregman divergences. First, the cluster center finding step--that corresponds to minimizing the loss with respect to cluster center variables--is no longer an average-finding operation and may be computationally expensive. Second, convergence and stability results for clustering beyond Bregman divergences are limited. For example, reference \cite{Huber_clust} shows a local convergence to a stationary point, assuming that the algorithm starts from an accurate cluster assignment.
 
In this paper, we propose a novel generalized clustering algorithm for a broad class of loss functions, and we provide a comprehensive convergence (stability) analysis for the algorithm. The assumed class of losses includes symmetric Bregman divergences (squared Euclidean norm, Mahalanobis, Jensen-Shannon, etc.), but more importantly, includes non-Bregman losses such as the Huber loss. The main novelty of the algorithm is that, at the cluster center finding step, the exact minimization of the loss function is replaced with a single gradient step with respect to the loss, hence significantly reducing computational cost in general. We prove that the algorithm converges to the appropriately defined stationary points associated with the joint loss with respect to the cluster assignment and cluster center variables, with arbitrary initialization. Numerical experiments on real data demonstrate that involving the cheap cluster center update incurs no or negligible loss both in clustering performance (appropriately measured accuracy) and in iteration-wise convergence speed, hence opening room for significant computational savings. We also show by simulation that the proposed method with the Huber loss exhibits a high degree of robustness to noisy data. While this is in line with prior findings on Huber-based clustering \cite{liu2019robust}, \cite{Huber_clust}, the proposed Huber-based method exhibits stronger theoretical convergence guarantees than those offered by the previous work. 

We now briefly review the literature to help us contrast the paper with existing work. Gradient based clustering has been explored in the context of the $K$-means cost in \cite{Macqueen67somemethods}, \cite{bottou_kmeans}. \cite{Macqueen67somemethods} analyzes a gradient based update rule for $K$-means, while \cite{bottou_kmeans} demonstrate that the standard centroid based solution of the $K$-means problem is equivalent to performing a Newton's method in each step. However, their analysis only concerns the squared Euclidean cost. Our work is considerably more general and can be applied to costs such as the Huber loss, or a class of Bregman divergences. \cite{Monath2017GradientbasedHC} propose a gradient-based approach for the problem of hierarchical clustering. \cite{paul2021uniform} use adaptive gradient methods to design a unified framework for robust center-based clustering, applicable to a large class of Bregman divergences.

A similar approach is used in the robotics community, in the context of coverage control problems, e.g. \cite{cortes}, \cite{Schwager09agradient}. However, the focus of their work is on continuous time gradient flow, designed for robot motion in a an environment that is typically an infinite set. Additionally, the authors in \cite{cortes} propose a family of discrete time algorithms, that converge to sets of centroidal Voronoi partitions, if the cost is squared Euclidean distance. On the other hand, our work focuses on a discrete time gradient algorithm, designed for clustering a finite set of points. We explicitly characterize the conditions under which the method converges, and extend the notion of distance to other metrics, beyond the Euclidean distance. 

\textbf{Paper organization}. The remainder of the paper is organized as follows. Section \ref{sec:problem} formally defines the clustering problem. Section \ref{sec:method} describes the proposed method. Section \ref{sec:analysis} presents the main results. Section \ref{sec:fixed_pt} presents an analysis of the fixed points the algorithm converges to. Section \ref{sec:num} presents numerical experiments, and Section \ref{sec:conclusion} concludes the paper. The appendix contains proofs of some technical results used throughout the paper.

\textbf{Notation}. $\R$ denotes the set of real numbers, while $\R^d$ denotes the corresponding $d$-dimensional vector space. More generally, for a vector space $V$, we denote by $V^K$ its $K$-dimensional extension. $\R_+$ denotes the set of non-negative real numbers. We denote by $\N$ the set of non-negative integers. $\| \cdot \|: \mathbb{R}^d \mapsto \mathbb{R}_+$ represents the standard Euclidean norm, while $\langle \cdot, \cdot \rangle: \R^d \times \R^d \mapsto \R$ denotes the inner product. $\nabla$ denotes the gradient operator, i.e., $\nabla_x f(x,y)$ denotes the gradient of the cost $f$ with respect to variable $x$. $[N]$ denotes the set of integers up to and including $N$, i.e., $[N] = \{1,\ldots,N\}$. In the algorithm description and throughout the analysis we use subscript to denote the iteration counter, while the value in the parenthesis corresponds to the particular center/cluster. In other words, $x_t(i)$ stands for the $i$-th cluster center at iteration $t$. Same holds for clusters, i.e., $C_t(i)$ denotes the $i$-th cluster at iteration $t$, corresponding to the subset of the data points assigned to cluster $i$, at iteration $t$.

\section{Problem formulation}\label{sec:problem}

In this section we formalize the clustering problem, and propose a general cost, that subsumes many of the previous clustering formulations.

Let $(\R^d,g)$ represent the standard $d$-dimensional real vector space, and a corresponding distance function. Let $\D \subset \R^d$ be a finite set, with an associated probability measure $\mu_{\D}$. For some $K > 1$, the problem of clustering the points in $\D$ into $K$ clusters can be cast as
\begin{equation}\label{eq:Kmeans}
    \min_{x \in \R^{K d}} \sum_{y \in \D}p_y \min_{i \in [K]} g(x(i), y)^2,
\end{equation}
where $x = \begin{bmatrix}x(1)^T,\ldots,x(K)^T \end{bmatrix} \in \R^{K d}$ represent the candidate cluster centers and $p_y \in (0,1)$\footnote{Note that, while a standard probability measure can take values in $[0,1]$,  we implicitly assume two things: the support of $\mu_{\D}$ is the whole set $\D$, and $\D$ contains at least two distinct points.}, given by $p_y \coloneqq \mu_{\D}(y)$, represent problem independent weights, that measure the importance of data points $y \in \D$. In the case when $g$ is the standard Euclidean distance, (\ref{eq:Kmeans}) is known in the literature as the $K$-means problem \cite{awasthi2014center}. Another problem similar in nature to (\ref{eq:Kmeans}) is given by
\begin{equation}\label{eq:Kmedians}
    \min_{x \in \R^{K d}}\sum_{y \in \D}p_y \min_{i \in [K]}g(x(i), y),
\end{equation} and for $g$ being the Euclidean distance, is known in the literature as $K$-medians \cite{arora_kmedians}. Both problems have been well studied, and are known to be NP-hard~\cite{Vattani2010TheHO}, \cite{awasthi2015hardness}, \cite{Kmedians-NP}. Many algorithms for solving (\ref{eq:Kmeans}) and (\ref{eq:Kmedians}) exist, guaranteeing convergence to locally optimal solutions, e.g. \cite{Lloyd}, \cite{Macqueen67somemethods}, \cite{JMLR:v6:banerjee05b}, \cite{pmlr-v9-telgarsky10a}, \cite{arora_kmedians}, \cite{arya-k-med}. However, all of the algorithms are specialized for solving either the $K$-means or the $K$-medians problem, and hence are not generally applicable. 

The problems (\ref{eq:Kmeans}), (\ref{eq:Kmedians}), can be equivalently defined as follows. For any $K > 1$, we call $C = (C(1),\ldots,C(K))$ a partition of $\D$, if $\D = \cup_{i \in [K]}C(i)$ and $C(i) \cap C(j) = \emptyset, \: \text{for } i \neq j$. Denote by $\C_{K,\D}$ the set of all $K$-partitions of $\D$. The clustering problem (\ref{eq:Kmeans}) is then equivalent to
\begin{equation}\label{eq:Kmeans2}
    \min_{x \in \R^{K d}, C \in \C_{K,\D}}J(x,C) = \sum_{i \in [K]}\sum_{y \in C(i)}p_yg(x(i),y)^2.
\end{equation} The problem (\ref{eq:Kmedians}) can be defined in the same way. 

We propose to unify and generalize (\ref{eq:Kmeans}) and (\ref{eq:Kmedians}) as follows. Let $f: \R^d \times \R^d \mapsto \R_+$, be a loss function that satisfies the following assumption. 

\begin{assumption}\label{asmpt:g&f}
The loss function $f$ is increasing with respect to the function $g$, i.e., for all $x, y, z \in \mathbb{R}^d$
\begin{equation*}
    g(x,y) \leq g(z,y) \text{ implies } f(x,y) \leq f(z,y).
\end{equation*}
\end{assumption}

We can then define the following general problem 
\begin{equation}\label{eq:cost}
       \min_{x \in R^{K d}, C \in \C_{K,\D}} J(x,C) = \sum_{i = 1}^K\sum_{y \in C(i)}p_yf(x(i),y).
\end{equation} 

\begin{remark}
Introducing the function $f$ along with $g$ allows us to naturally decouple the concepts of \emph{cluster shape} and \emph{location of cluster center}. In particular, the function $g$ dictates the cluster shape, while the choice of function $f$ determines the exact location of the cluster centers. We elaborate further on this in Section \ref{sec:fixed_pt}.
\end{remark}

\begin{remark}
Compared to (\ref{eq:Kmeans2}), the formulation (\ref{eq:cost}) is more general, in the sense that, while the dependence of $f$ on $g$ is maintained, via Assumption \ref{asmpt:g&f}, the function $f$ provides more flexibility, as is illustrated by the following examples.   
\end{remark}

\begin{exmp} For the choice $g(x,y) = \|x - y\|$, and $f(x,y) = g(x,y)$, the $K$-medians formulation is recovered. For the choice $g(x,y) = \|x - y\|$, and $f(x,y) = g(x,y)^2$, the $K$-means formulation is recovered. For the choice $g(x,y) = f(x,y)$, being a Bregman distance, the Bregman divergence clustering formulation from \cite{JMLR:v6:banerjee05b} is recovered. For the choice $g(x,y) = \|x - y\|$, and $f(x,y) = \phi_{\delta}(g(x,y))$,  where $\phi_{\delta}(x)$ is the Huber loss, the formulation from \cite{Huber_clust} is recovered. We recall that the Huber loss is defined by
\begin{equation}\label{eq:Huber}
    \phi_{\delta}(x) = \begin{cases}
        \frac{x^2}{2}, &|x| \leq \delta \\
        \delta |x| - \frac{\delta^2}{2}, &|x| > \delta
    \end{cases}.
\end{equation} 
\end{exmp}

\section{The proposed method}\label{sec:method}

In this section we outline the proposed method for solving instances of (\ref{eq:cost}) that satisfy some mild assumptions (see ahead Assumptions \ref{asmpt:met}-\ref{asmpt:co-coerc}).

To solve (\ref{eq:cost}), an iterative approach is proposed. Starting from an arbitrary initialization $x_0$, at every iteration $t$, it maintains and updates the pair $(x_t,C_t)$, where $x_t \coloneqq [x_t(1)^T,
x_t(2)^T,
\ldots ,
x_t(K)^T
]^T \in R^{K d}$ and $C_t \coloneqq (C_t(1),\ldots,C_t(K))$ represent stacks of centers and clusters at time $t \in \N$. The iterative approach consists of two steps:
\begin{enumerate}
    \item \textit{Cluster reassignment}: for each $y \in \D$, we find the index $i \in [K]$, such that
    \begin{equation}\label{eq:reassign}
        g(x_t(i),y) \leq g(x_t(j),y), \forall j \ne i,
    \end{equation} and assign the point $y$ to cluster $C_{t+1}(i)$. 
    \item \textit{Center update}: for each $i \in [K]$, we perform the following update
    \begin{equation}\label{eq:grad_local}
        x_{t+1}(i) = x_t(i) - \alpha \sum_{y \in C_{t+1}(i)}p_y\nabla_x f\big(x_t(i),y\big),
    \end{equation} where $\alpha > 0$ is a fixed step-size.
\end{enumerate} Note that (\ref{eq:grad_local}) can be written compactly as
\begin{equation}\label{eq:grad_global}
    x_{t+1} = x_t - \alpha \nabla_x J(x_t,C_{t+1}),
\end{equation} where $\nabla_x J(x_t, C_{t+1}) \in \R^{Kd}$ is the gradient of $J$ with respect to $x$, whose $i$-th block of size $d$ is given by
\begin{equation}\label{eq:grad_J}
    \big[\nabla_x J(x_t,C_{t+1})\big]_i = \sum_{y \in C_{t+1}(i)}p_y\nabla_x f(x_t(i),y). 
\end{equation}

In addition to Assumption \ref{asmpt:g&f}, for our method to be applicable, we make the following assumptions on functions $f$, $g$ and $J$.

\begin{assumption}\label{asmpt:met}
The distance function $g$ is a metric, i.e., it satisfies the following properties: $1) \: g(x,y) \geq 0$, and $g(x,y) = 0 \iff x = y;$ $2) \: g(x,y) = g(y,x);$ $3) \: g(x,y) \leq g(x,z) + g(z,y)$.
\end{assumption}

\begin{remark}
Assumption \ref{asmpt:met} requires the distance function $g$, that dictates cluster assignment, to be a distance metric. Note that, with respect to \cite{JMLR:v6:banerjee05b}, Bregman divergences are not necessarily symmetric, nor do they obey the triangle inequality. However, \cite{bregman_triangle}, \cite{chen_bregman_metrics} show that a large class of Bregman divergences, such as Mahalanobis distances, as well as Jensen-Shannon divergence, represent squares of metrics. Hence, for the choice $f(x,y)$ a Bregman divergence representing the square of a metric and $g(x,y) = \sqrt{f(x,y)}$, Assumption \ref{asmpt:met} is satisfied. 
\end{remark}
    
\begin{assumption}\label{asmpt:coerc} 
The cost function $f$ is coercive with respect to the first argument, i.e.
$\lim_{\|x\| \rightarrow +\infty}f(x,y) = +\infty, \: \forall \: y \ne x$.
\end{assumption}    

\begin{remark}
Assumption \ref{asmpt:coerc} ensures that the sequence of centers, $\{x_t\}$, generated by (\ref{eq:grad_global}) remains bounded. It does so, by not allowing for $x$ to grow infinitely without affecting the loss function $f$.  
\end{remark}    

\begin{assumption}\label{asmpt:co-coerc} The function $J$ has co-coercive gradients in the first argument, i.e., for all $x,z \in \R^{Kd}$
\begin{align*}
    \langle \nabla_x J(x,C) -& \nabla_z J(z,C), x - z  \rangle \\ &\geq \frac{1}{L}\|\nabla_x J(x,C) - \nabla_z J(z,C) \|^2.
\end{align*}
\end{assumption}

\begin{remark}
Assumption \ref{asmpt:co-coerc} ensures that the sequence of centers, $\{x_t\}$, generated by (\ref{eq:grad_global}) not only decreases the cost $J$, but also decreases the distance of the generated sequence $\{x_t\}$ to a stationary point $x_*$ (or the set of stationary points in general), at every iteration. 
\end{remark}

\begin{remark}\label{rmk:Lischitz-grad}
Assumption \ref{asmpt:co-coerc} implies Lipschitz continuos gradients with respect to the first argument of the function $J$, as a result of the Cauchy-Schwartz inequality. As we show in the Appendix, Assumption \ref{asmpt:co-coerc} is satisfied for any function that is convex and has Lipschitz continuous gradients.
\end{remark}

\begin{remark}
Note that Assumption \ref{asmpt:co-coerc} rules out non-smooth costs, such as $K$-medians, (\ref{eq:Kmedians}). However, when a desirable feature of the cost is robustness, smooth costs like the Huber loss can be used.
\end{remark}

\section{Convergence analysis}\label{sec:analysis}

In this section the goal is to show that the method (\ref{eq:reassign})-(\ref{eq:grad_local}) converges to a fixed point.

To begin with, the notions of a fixed point and a set of optimal clusterings are defined.

\begin{definition}\label{def:fix-pt}
The pair $(x_*,C_*)$ is a fixed point of the clustering procedure (\ref{eq:reassign})-(\ref{eq:grad_local}), if the following holds:
\begin{enumerate}
    \item Optimal clustering with respect to centers: for each $i \in [K]$, and each $y \in C_*(i)$, we have
    \begin{equation}\label{eq:opt_clust}
        g(x_*(i),y) \leq g(x_*(j),y), \forall j \neq i.
    \end{equation}
    \item Optimal centers with respect to clustering:
    \begin{equation*}
        \nabla_x J(x_*,C_*) = 0.
    \end{equation*}
\end{enumerate}
\end{definition}

\begin{definition}\label{def:U}
Let $x \in \R^{Kd}$ represent cluster centers. We say $U_x$ is the set of optimal clusterings with respect to $x$, if for all clusterings $C \in U_x$, (\ref{eq:reassign}) is satisfied.
\end{definition}

\begin{definition}\label{def:Ubar}
Let $x \in \R^{Kd}$ represent cluster centers. We define the set $\overline{U}_x$ as the set of clusterings with respect to $x$ such that: $1) \: \overline{U}_x \subset U_x$; $2) \: \forall C \in \overline{U}_x: \: \nabla_x J(x,C) = 0$. 
\end{definition}

\begin{remark}
As we show in Section \ref{sec:analysis}, for a Bregman cost (of which the $K$-means problem is a special case) any fixed point, per Definition $\ref{def:fix-pt}$, represents a centroidal partition of the data, i.e., the centers $x_*(i)$ correspond to the means of clusters $C_*(i)$. This is consistent with results in \cite{JMLR:v6:banerjee05b}, and shows that Definition \ref{def:fix-pt} is a natural one.
\end{remark}

\begin{remark}
In a slight abuse of terminology, we will refer to a point $x$ as fixed point, if there exists a clustering $C$ such that $(x,C)$ satisfies Definition \ref{def:fix-pt}.
\end{remark}

\begin{remark}
Note that, by Definition \ref{def:Ubar}, a pair $(x,C)$ is a fixed point if $C \in \overline{U}_x$. 
\end{remark}

The main result of the paper is stated in Theorem \ref{thm:convergence}, which shows the convergence of the sequence of cluster centers to a fixed point. 

\begin{theorem}\label{thm:convergence}
Let Assumptions \ref{asmpt:g&f}, \ref{asmpt:met}, \ref{asmpt:coerc}, \ref{asmpt:co-coerc} hold. For the step-size choice $\alpha < \frac{2}{L}$ and any $x_0 \in \R^{K d}$, the sequence of centers $\{x_t\}$ generated by the algorithm (\ref{eq:reassign})-(\ref{eq:grad_local}), converges to a fixed point $x_* \in \R^{K d}$, i.e., a point such that $\overline{U}_{x_*} \neq \emptyset$.  
\end{theorem}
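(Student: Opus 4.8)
The plan is to combine a monotone-descent argument with the firm nonexpansiveness of a single gradient step, exploiting that, since $\D$ is finite, there are only finitely many admissible clusterings.

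\emph{Step 1 (descent and asymptotic regularity).} The reassignment step cannot increase $J$: writing $J(x_t,C_t)=\sum_{y\in\D}p_y f(x_t(i_t(y)),y)$, where $i_t(y)$ is the cluster of $y$ under $C_t$, the new index $i_{t+1}(y)$ satisfies $g(x_t(i_{t+1}(y)),y)\le g(x_t(i_t(y)),y)$ by (\ref{eq:reassign}), hence $f(x_t(i_{t+1}(y)),y)\le f(x_t(i_t(y)),y)$ by Assumption \ref{asmpt:g&f}, and summing with the positive weights $p_y$ gives $J(x_t,C_{t+1})\le J(x_t,C_t)$. For the center step, Remark \ref{rmk:Lischitz-grad} gives that $J(\cdot,C_{t+1})$ is $L$-smooth, so the descent inequality together with $x_{t+1}-x_t=-\alpha\nabla_x J(x_t,C_{t+1})$ yields
\begin{equation*}
J(x_{t+1},C_{t+1})\le J(x_t,C_{t+1})-\alpha\Big(1-\tfrac{\alpha L}{2}\Big)\|\nabla_x J(x_t,C_{t+1})\|^2,
\end{equation*}
with positive coefficient since $\alpha<2/L$. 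Thus $t\mapsto J(x_t,C_{t+1})$ is non-increasing and bounded below by $0$, so it converges; telescoping gives $\sum_t\|\nabla_x J(x_t,C_{t+1})\|^2<\infty$, hence $\nabla_x J(x_t,C_{t+1})\to 0$ and $\|x_{t+1}-x_t\|\to 0$. Assumption \ref{asmpt:coerc} (bounded sublevel sets of each $f(\cdot,y)$, with empty clusters leaving their center unchanged) then shows $\{x_t\}$ is bounded.

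\emph{Step 2 (limit points are fixed points).} Since $\C_{K,\D}$ is finite, some $C$ occurs as $C_{t+1}$ for infinitely many $t$; along a sub-subsequence $x_{t_k}\to x_*$ by boundedness. Letting $k\to\infty$ in $g(x_{t_k}(i),y)\le g(x_{t_k}(j),y)$ using continuity of $g$ gives $C\in U_{x_*}$, and continuity of $\nabla_x J(\cdot,C)$ with $\nabla_x J(x_{t_k},C)\to 0$ gives $\nabla_x J(x_*,C)=0$, so $C\in\overline U_{x_*}$ and $(x_*,C)$ is a fixed point per Definition \ref{def:fix-pt}; in particular $\overline U_{x_*}\neq\emptyset$.

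\emph{Step 3 (convergence of the whole sequence).} Fix such a limit point $x_*$. Two finiteness-plus-continuity facts do the work: (i) there is $\rho>0$ with $U_x\subseteq U_{x_*}$ whenever $\|x-x_*\|<\rho$, because for each of the finitely many $C\notin U_{x_*}$ some strict inequality $g(x_*(i),y)>g(x_*(j),y)$ persists in a neighborhood; and (ii) for each of the finitely many $C\in U_{x_*}\setminus\overline U_{x_*}$ we have $\nabla_x J(x_*,C)\neq 0$, so on a smaller ball $\|\nabla_x J(x,C)\|$ stays bounded below by some $c_0>0$. Since $\|x_{t+1}-x_t\|=\alpha\|\nabla_x J(x_t,C_{t+1})\|\to 0$, for large $t$ with $x_t$ in that ball the step is too small to have come from $C_{t+1}\in U_{x_*}\setminus\overline U_{x_*}$, forcing $C_{t+1}\in\overline U_{x_*}$, i.e. $\nabla_x J(x_*,C_{t+1})=0$; thus $x_*$ is a fixed point of $T_C(x):=x-\alpha\nabla_x J(x,C)$ with $C=C_{t+1}$. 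Then Assumption \ref{asmpt:co-coerc} and $\alpha<2/L$ give
\begin{align*}
\|x_{t+1}-x_*\|^2 &= \|x_t-x_*\|^2 - 2\alpha\langle\nabla_x J(x_t,C_{t+1}),x_t-x_*\rangle + \alpha^2\|\nabla_x J(x_t,C_{t+1})\|^2\\
&\le \|x_t-x_*\|^2 - \alpha\Big(\tfrac{2}{L}-\alpha\Big)\|\nabla_x J(x_t,C_{t+1})\|^2 \;\le\; \|x_t-x_*\|^2 .
\end{align*}
So once an iterate enters a small enough ball about $x_*$ — which happens along the subsequence of Step 2 — the sequence $\|x_t-x_*\|$ is non-increasing, and having a subsequence tending to $0$ it tends to $0$, i.e. $x_t\to x_*$. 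I expect facts (i)–(ii) — controlling which clusterings can appear near a limit point, in particular handling ties in the reassignment rule, and ruling out $C_{t+1}\in U_{x_*}\setminus\overline U_{x_*}$ near $x_*$ — to be the delicate point; the descent estimate of Step 1 and the nonexpansiveness estimate of Step 3 are routine once Assumptions \ref{asmpt:coerc} and \ref{asmpt:co-coerc} are in hand.
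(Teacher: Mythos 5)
Your proposal is correct, and its skeleton matches the paper's: per-iteration descent for $\alpha<\frac{2}{L}$ (Lemma \ref{lm:decr}), boundedness of $\{x_t\}$ from coercivity with the same empty-cluster caveat (Lemma \ref{lm:bdd-seq}), a subsequential limit shown to be a fixed point, and then the identical Fej\'er-type inequality from Assumption \ref{asmpt:co-coerc} (compare your last display with (\ref{eq:interest})) to trap the whole sequence near $x_*$. You differ in the internal mechanics of two steps, in both cases somewhat more economically. First, for the fixed-point property of limit points you use finiteness of $\C_{K,\D}$ to pick a clustering $C$ that recurs as $C_{t+1}$, and pass to the limit directly in the non-strict inequalities (\ref{eq:reassign}) and in $\nabla_x J(x_{t_k},C)\to 0$; the paper instead argues by contradiction (Lemma \ref{lm:fix_pt}) and separately proves finite-time cluster convergence (Lemma \ref{lm:Clust_fin_conv}), which your Step 3 renders unnecessary. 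Second, to rule out $C_{t+1}\in U_{x_*}\setminus\overline{U}_{x_*}$ once the iterates are near $x_*$, the paper's Lemma \ref{lm:final-step} uses a cost-level device (an iterate with cost within $\frac{c(\alpha)}{2}(\epsilon_1-L\epsilon)^2$ of the limit $J_*$ would be forced below $J_*$), whereas you use asymptotic regularity: summability of $\|\nabla_x J(x_t,C_{t+1})\|^2$ over the whole sequence (which the paper also derives, see (\ref{eq:sum})) is incompatible with the uniform lower bound $c_0$ that holds on a small ball for the finitely many clusterings in $U_{x_*}\setminus\overline{U}_{x_*}$. Both mechanisms are sound; yours trades the cost-limit bookkeeping for explicit use of the continuity of $x\mapsto g(x,y)$ and of $\nabla_x J(\cdot,C)$, both of which the paper also relies on (continuity of $g$ in Lemmas \ref{lm:fix_pt} and \ref{lm:final-step}, Lipschitz gradients via Remark \ref{rmk:Lischitz-grad}), and your neighborhood version of Lemma \ref{lm:clust_convg} via persistence of strict inequalities is an equally valid substitute for the paper's margin argument.
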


The result of Theorem \ref{thm:convergence} is strong - for a fixed step-size, under arbitrary initialization, the proposed algorithm converges to a fixed point. In the context of $K$-means clustering, e.g. \cite{Lloyd}, \cite{JMLR:v6:banerjee05b}, we achieve the same guarantees. In the context of different costs, e.g. Huber loss, compared to \cite{Huber_clust}, where the authors show convergence of the sequence of centers, under the assumptions that the clusters have already converged, and the initialization $x_0$ is sufficiently close to a fixed point, our results are much stronger - we guarantee that the full sequence $\{x_t\}$ converges to a fixed point, under arbitrary initialization. We also show that the clusters converge.  

To prove Theorem \ref{thm:convergence}, a series of intermediate lemmas is introduced. The proof outline follows a similar idea as the one developed in~\cite{kar2019clustering}.

The following lemma shows that the proposed algorithm decreases the objective function $J$ in each iteration.

\begin{lemma}\label{lm:decr}
For the sequence $\{(x_t,C_t)\}$, generated by (\ref{eq:reassign})-(\ref{eq:grad_local}), with $\alpha < \frac{2}{L}$, the resulting sequence of costs $\{J(x_t,C_t)\}$ is non-increasing.
\end{lemma}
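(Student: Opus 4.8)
The plan is to exploit the two-step structure of one iteration: the cluster reassignment step cannot increase $J$, and the gradient step on the centers cannot increase $J$ either, provided the step-size is small enough. Concretely, I would show
\[
J(x_{t+1},C_{t+1}) \le J(x_t,C_{t+1}) \le J(x_t,C_t),
\]
where the right inequality comes from the reassignment step and the left inequality from the descent step.

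First I would handle the reassignment step. Fix the centers at $x_t$. For each data point $y \in \D$, the new assignment sends $y$ to an index $i$ with $g(x_t(i),y) \le g(x_t(j),y)$ for all $j$; in particular $g(x_t(i),y) \le g(x_t(i'),y)$ where $i'$ is the index $y$ had under $C_t$. By Assumption~\ref{asmpt:g&f} ($f$ increasing in $g$), this gives $p_y f(x_t(i),y) \le p_y f(x_t(i'),y)$. Summing over all $y \in \D$ and regrouping the sum by clusters yields $J(x_t,C_{t+1}) \le J(x_t,C_t)$. The only subtlety is bookkeeping: rewriting $\sum_{i}\sum_{y\in C(i)} p_y f(x(i),y)$ as $\sum_{y\in\D} p_y f(x(c_C(y)),y)$, where $c_C(y)$ denotes the cluster index of $y$ under partition $C$, makes the comparison term-by-term and transparent.

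Next, the center update. With the clustering frozen at $C_{t+1}$, the map $x \mapsto J(x,C_{t+1})$ has $L$-Lipschitz gradient in its first argument — this is Remark~\ref{rmk:Lischitz-grad}, a consequence of Assumption~\ref{asmpt:co-coerc} via Cauchy–Schwarz. The update (\ref{eq:grad_global}) is exactly a gradient step $x_{t+1} = x_t - \alpha \nabla_x J(x_t,C_{t+1})$. The standard descent lemma for functions with $L$-Lipschitz gradient gives
\[
J(x_{t+1},C_{t+1}) \le J(x_t,C_{t+1}) - \alpha\Big(1 - \tfrac{\alpha L}{2}\Big)\,\big\|\nabla_x J(x_t,C_{t+1})\big\|^2,
\]
and for $\alpha < 2/L$ the coefficient $\alpha(1-\alpha L/2)$ is positive, so the right-hand side is at most $J(x_t,C_{t+1})$. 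Chaining the two inequalities proves $J(x_{t+1},C_{t+1}) \le J(x_t,C_t)$, i.e. the cost sequence is non-increasing.

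I expect the main (minor) obstacle to be care with the descent lemma: strictly, the descent lemma is usually stated for a function defined on all of $\R^{Kd}$, so I would note that $x\mapsto J(x,C_{t+1})$ is indeed globally defined with globally $L$-Lipschitz gradient (the partition $C_{t+1}$ is fixed during this step, and $J$ is a finite sum of the $p_y f(\cdot,y)$ terms, each smooth by Assumption~\ref{asmpt:co-coerc}/Remark~\ref{rmk:Lischitz-grad}). A secondary point worth a sentence is that ties in the reassignment rule (\ref{eq:reassign}) do not affect the argument, since any tie-breaking choice still satisfies $g(x_t(i),y)\le g(x_t(i'),y)$ and hence the per-point inequality above.
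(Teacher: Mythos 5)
Your proposal is correct and follows essentially the same route as the paper's proof: the reassignment inequality $J(x_t,C_{t+1})\leq J(x_t,C_t)$ via Assumption \ref{asmpt:g&f}, followed by the descent lemma for the $L$-Lipschitz gradient of $x\mapsto J(x,C_{t+1})$ and the step-size condition $\alpha<2/L$ giving a positive coefficient $\alpha(1-\alpha L/2)$. Your extra remarks on tie-breaking and on the per-point regrouping of the sum are sound but not needed beyond what the paper already argues.
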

\begin{proof}
To begin with, note that (\ref{eq:reassign}) together with Assumption \ref{asmpt:g&f} implies that the clustering reassignment step decreases the cost, i.e.
\begin{align}\label{eq:leq1}
    \begin{aligned}
        J(x_t,C_{t+1}) &= \sum_{i = 1}^K\sum_{y \in C_{t+1}(i)}p_yf\big(x_t(i),y\big) \\ &\leq \sum_{i = 1}^K\sum_{y \in C_t(i)}p_yf\big(x_t(i),y\big) = J(x_t,C_t).
    \end{aligned}
\end{align} Next, using Lipschitz continuity of gradients of $J$ (recall Remark \ref{rmk:Lischitz-grad}), we have
\begin{align*}
    J(x_{t+1},C_{t+1}) &\leq J(x_t,C_{t+1}) + \frac{L}{2}\|x_{t+1} - x_t \|^2 \\ &+ \Big\langle \nabla_x J(x_t,C_{t+1}), x_{t+1} - x_t \Big\rangle.
\end{align*} Using (\ref{eq:grad_global}), we get
\begin{align*}
    \begin{aligned}
        J(x_{t+1},C_{t+1}) \leq J(x_t,C_{t+1}) - c(\alpha) \| \nabla_x J(x_t,C_{t+1})\|^2,
    \end{aligned} 
\end{align*} where $c(\alpha) = \alpha \Big(1 - \frac{\alpha L}{2} \Big)$.  Choosing $\alpha < \frac{2}{L}$ ensures that $c(\alpha) > 0$, and combining with (\ref{eq:leq1}), we get
\begin{align}\label{eq:J-decr}
    \begin{aligned}
        J(x_{t+1},C_{t+1}) &\leq J(x_t,C_{t+1}) - c(\alpha)\| \nabla_x J(x_t,C_{t+1})\|^2 \\ &\leq J(x_t,C_t) - c(\alpha)\| \nabla_x J(x_t,C_{t+1})\|^2 \\ &\leq J(x_t,C_t),
    \end{aligned}
\end{align} which completes the proof.
\end{proof}

The following lemma shows that, if two cluster centers are sufficiently close, the optimal clustering sets match.

\begin{lemma}\label{lm:clust_convg}
Let $x \in \mathbb{R}^{K d}$ represent cluster centers. Then, $\exists \epsilon > 0$, such that, for any center $x^{\prime} \in \mathbb{R}^{K d}$, satisfying $\max_{i \in [K]} g(x(i),x^{\prime}(i)) < \epsilon$, we have $U_{x^{\prime}} \subset U_{x}$.
\end{lemma}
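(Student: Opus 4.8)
The plan is to exploit the finiteness of $\D$ and the discrete nature of the cluster-assignment rule. The key observation is that for a fixed data point $y$, whether $y$ can be assigned to cluster $i$ under centers $x$ depends only on the sign pattern of the finitely many quantities $g(x(j),y)-g(x(i),y)$ for $j\in[K]$. A clustering $C$ belongs to $U_x$ precisely when, for every $i$ and every $y\in C(i)$, we have $g(x(i),y)\le g(x(j),y)$ for all $j$. So the containment $U_{x'}\subset U_x$ will fail only if some clustering is ``newly admissible'' at $x'$, i.e.\ there is a pair $(i,y)$ with $y$ admissible for cluster $i$ at $x'$ but not at $x$. I want to rule this out by taking $x'$ close enough to $x$.

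First I would partition the relevant index–point pairs. For each $y\in\D$ and each $i\in[K]$, call $(i,y)$ \emph{tight at $x$} if $g(x(i),y)=\min_{j}g(x(j),y)$ but there exists some $j$ with $g(x(i),y)=g(x(j),y)$ as well — actually the cleaner split is: $(i,y)$ is \emph{strictly suboptimal at $x$} if $g(x(i),y)>\min_j g(x(j),y)$, and \emph{optimal at $x$} otherwise. For the strictly suboptimal pairs, define
\[
    \gamma \;=\; \min_{(i,y)\ \mathrm{strictly\ suboptimal}}\ \Big(g(x(i),y)-\min_{j\in[K]}g(x(j),y)\Big) \;>\;0,
\]
a positive number because it is a minimum over a finite nonempty set (if no pair is strictly suboptimal, $U_x$ is everything and the claim is trivial). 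Now use Assumption~\ref{asmpt:met}: $g$ is a metric, so by the triangle inequality $|g(x(i),y)-g(x'(i),y)|\le g(x(i),x'(i))\le \max_{k}g(x(k),x'(k))$ for every $i$ and $y$. Hence if $\max_k g(x(k),x'(k))<\epsilon:=\gamma/2$, then for every pair $(i,y)$ that is strictly suboptimal at $x$, moving to $x'$ perturbs $g(\cdot,y)$ at each center by less than $\gamma/2$, so the gap $g(x'(i),y)-\min_j g(x'(j),y)$ stays strictly positive; i.e.\ $(i,y)$ remains strictly suboptimal at $x'$.

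The conclusion then follows: take any $C\in U_{x'}$ and any $i$, $y\in C(i)$. If $(i,y)$ were strictly suboptimal at $x$, the previous step shows it is strictly suboptimal at $x'$, contradicting $C\in U_{x'}$. So $(i,y)$ is optimal at $x$, meaning $g(x(i),y)\le g(x(j),y)$ for all $j$ — exactly the condition \eqref{eq:reassign} for $C$ at $x$. Since this holds for all $i$ and all $y\in C(i)$, we get $C\in U_x$, hence $U_{x'}\subset U_x$. The only subtlety I anticipate is the bookkeeping around ``optimal but not unique'' pairs: those pairs impose no constraint we need to preserve, since $U_x$ only requires the inequality $g(x(i),y)\le g(x(j),y)$ (not strict), so ties at $x$ may be broken either way at $x'$ and $U_{x'}\subset U_x$ is unaffected. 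Making sure the metric/triangle-inequality bound is applied in the correct direction — controlling the \emph{decrease} of the winning distance and the \emph{increase} of the losing distances simultaneously — is the one place to be careful, but it is immediate once $\epsilon$ is chosen as half the smallest strict gap.
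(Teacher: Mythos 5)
Your proposal is correct and is essentially the paper's own argument: your $\gamma$ is the paper's minimum gap $\epsilon_0$ over non-closest center/point pairs, you take the same radius $\gamma/2$, and you use the same triangle-inequality perturbation bound, merely phrased in contrapositive form (strictly suboptimal pairs stay strictly suboptimal) rather than the paper's direct inclusion $\mathcal{K}_{x^\prime}^\star(y)\subseteq\mathcal{K}_x^\star(y)$. No gaps; the explicit handling of the degenerate case where no pair is strictly suboptimal is a small point the paper leaves implicit.
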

\begin{proof}
 For given cluster centers $x \in \mathbb{R}^{Kd}$ and each data point $y \in \mathcal{D}$, we denote by $\mathcal K_x^\star(y)$ the set of cluster indices $i$ whose centers $x(i)$ are closest to $y$:
\begin{equation*}
  \mathcal{K}_x^\star(y)=\argmin_{i \in [K]} g(x(i),y).   
\end{equation*}
Define 
\begin{equation}
\label{eq:best-off}
\epsilon_0:=  \min_{y \in \D} \min_{i \in [K]\setminus \mathcal K_x^\star(y)}  g(x(i),y) - g(x({c^\star_y}), y),    
\end{equation}
where $c^\star_y$ denotes an arbitrary cluster in  $\mathcal K_x^\star(y)$. By the construction of $\mathcal K_x^\star(y)$ and finiteness of the set of data points $\D$, we have that $\epsilon_0>0$.

Let $\mathcal X_{x,\epsilon}:= \left\{ x^\prime \in \mathbb{R}^{K d}: g (x(i),x(i)^\prime) < \epsilon, \forall i \in [K] \right\} $, where $\epsilon>0$. We show that, for each $x^\prime \in \mathcal X_{x,\epsilon_0/2}$,  for each $y \in \D$, there holds 
\begin{equation}
\label{eq:nested}
  \mathcal{K}_{x^\prime}^\star(y)\subseteq \mathcal{K}_x^\star(y).  
\end{equation}
From~\eqref{eq:nested}, it is easy to see that any optimal cluster assignment with respect to $x^\prime$, $C \in U_{x^\prime}$, will also be optimal with respect to $x$, thus implying the claim of the lemma. 

To prove~\eqref{eq:nested}, fix an arbitrary data point $y$ and fix an arbitrary $ i \in \mathcal K_{x^\prime}^\star (y) $. We want to show that $i\in \mathcal K_{x}^\star (y)$ as well, i.e., that cluster center $x(i)$ belongs to the set of cluster centers $x$  closest to $y$. By the triangle inequality for $g$, we have 
\begin{align}
\begin{aligned}
\label{eq:below-epsilon_0}
g(x(i), y) &\leq g(x(i), x^\prime(i)) + g(x^\prime(i), y) \\ &< \frac{\epsilon_0}{2} + g(x^\prime(j),y) \\ &\leq \frac{\epsilon_0}{2} + g(x(j), x^\prime(j))+ g(x(j), y) \\ &< \epsilon_0 + g(x(j), y),
\end{aligned}
\end{align} where in the second line we use the fact that $x^\prime \in \mathcal X_{x, \epsilon_0/2}$ (for index $i$) and the fact that $i\in \mathcal K_{x^\prime}^\star (y)$, in the third line we apply the triangle inequality for $g$, and in the fourth line we use again the fact that $x^\prime$ is in the $\epsilon_0/2$ neighborhood of $x$ (for index $j$). 
For the sake of contradiction, suppose now that $ i\notin \mathcal K_x^\star (y)$ and take $j \in \mathcal K_x^\star (y)$ (note that~\eqref{eq:below-epsilon_0} holds for all $j\in [K]$).  Then, from~\eqref{eq:best-off} we have $g(x(i),y) \geq g(x(j),y )+\epsilon_0$, which clearly contradicts~\eqref{eq:below-epsilon_0}. This proves~\eqref{eq:nested} and subsequently proves the lemma.    
\end{proof}

The next lemma shows that, if a limit point of the sequence of centers exists, it must be a fixed point.

\begin{lemma}\label{lm:fix_pt}
Any convergent subsequence of the sequence $\{x_t \}$, generated by (\ref{eq:reassign})-(\ref{eq:grad_local}), converges to a fixed point. 
\end{lemma}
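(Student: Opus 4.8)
The plan is to take a convergent subsequence $\{x_{t_k}\}$ with limit $x_*$ and show that $(x_*, C_*)$ satisfies both conditions of Definition \ref{def:fix-pt} for an appropriate clustering $C_*$. First I would establish that the gradient term vanishes along the subsequence. From Lemma \ref{lm:decr}, the cost sequence $\{J(x_t, C_t)\}$ is non-increasing; since $J \geq 0$ (as $f$ maps into $\R_+$), it converges, so the increments $J(x_t, C_t) - J(x_{t+1}, C_{t+1})$ are summable. Re-examining the chain \eqref{eq:J-decr}, each increment is at least $c(\alpha) \| \nabla_x J(x_t, C_{t+1}) \|^2$ with $c(\alpha) > 0$, hence $\nabla_x J(x_t, C_{t+1}) \to 0$ along the \emph{entire} sequence, in particular along $\{t_k\}$. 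Equivalently, $\| x_{t+1} - x_t \| = \alpha \| \nabla_x J(x_t, C_{t+1}) \| \to 0$.

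Next I would pin down the clustering. The set of possible clusterings $\C_{K,\D}$ is finite, so along $\{t_k\}$ some clustering $C_*$ is attained infinitely often; pass to a further subsequence (still call it $\{t_k\}$) along which $C_{t_k+1} = C_*$ for all $k$. Since $x_{t_k} \to x_*$ and by continuity of $g$ the reassignment rule \eqref{eq:reassign} is a closed condition, taking limits in $g(x_{t_k}(i), y) \leq g(x_{t_k}(j), y)$ for each $y \in C_*(i)$ gives condition 1 of Definition \ref{def:fix-pt}, i.e. $C_* \in U_{x_*}$. For condition 2, note that $\nabla_x J(\cdot, C_*)$ is continuous in its first argument (Lipschitz by Remark \ref{rmk:Lischitz-grad}), so $\nabla_x J(x_{t_k}, C_*) \to \nabla_x J(x_*, C_*)$; but we showed $\nabla_x J(x_{t_k}, C_{t_k+1}) = \nabla_x J(x_{t_k}, C_*) \to 0$, whence $\nabla_x J(x_*, C_*) = 0$. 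Thus $(x_*, C_*)$ is a fixed point and $C_* \in \overline{U}_{x_*}$, so $\overline{U}_{x_*} \neq \emptyset$.

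The main subtlety I anticipate is the interplay between the two subsequence extractions: the gradient decay $\nabla_x J(x_t, C_{t+1}) \to 0$ must be argued for the full sequence \emph{before} restricting to the sub-subsequence on which $C_{t+1}$ is constant, so that the constant-clustering sub-subsequence inherits it; doing it in the wrong order would only give decay along a sparse set of indices and the continuity argument for condition 2 would not go through cleanly. A second point requiring a little care is ensuring that the limiting inequality in the reassignment step is genuinely the correct closed condition — this is immediate from continuity of $g$ (Assumption \ref{asmpt:met} makes $g$ a metric, hence continuous), but one should note that ties in \eqref{eq:reassign} are allowed, which is exactly why $U_{x_*}$ is defined as a set rather than a single clustering, and why we only need \emph{some} $C_*$ rather than uniqueness. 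Boundedness of $\{x_t\}$ (guaranteeing limit points exist) follows from Assumption \ref{asmpt:coerc} together with the cost being non-increasing, but since the lemma only speaks of subsequences that already converge, this is not strictly needed here.
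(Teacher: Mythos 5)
Your proof is correct, but it follows a noticeably different route from the paper's. Both arguments rest on the same two pillars: the telescoped descent bound from Lemma \ref{lm:decr}, which gives $\sum_{t}\|\nabla_x J(x_t,C_{t+1})\|^2<\infty$ and hence vanishing gradients (you note, correctly, that this holds along the whole sequence, not just the subsequence), and the Lipschitz continuity of $\nabla_x J(\cdot,C)$ from Remark \ref{rmk:Lischitz-grad}. Where you diverge is in how the limiting clustering is handled: you argue directly, using finiteness of $\C_{K,\D}$ to extract a sub-subsequence with a constant clustering $C_*$, passing the non-strict assignment inequalities $g(x_{t_k}(i),y)\leq g(x_{t_k}(j),y)$ to the limit to get $C_*\in U_{x_*}$, and then concluding $\nabla_x J(x_*,C_*)=0$ by continuity; the paper instead argues by contradiction, defining $\epsilon_1=\min_{C\in U_{x_*}}\|\nabla_x J(x_*,C)\|>0$ over the finite set $U_{x_*}$ and invoking Lemma \ref{lm:clust_convg} to place every $C_{t_s+1}$ inside $U_{x_*}$ for large $s$, so that $\|\nabla_x J(x_*,C_{t_s+1})\|\to 0$ contradicts the assumed $\epsilon_1$. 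Your version is more elementary and self-contained (no appeal to Lemma \ref{lm:clust_convg}, no contradiction scaffolding) and it exhibits an explicit witness $C_*\in\overline{U}_{x_*}$; the paper's version, on the other hand, establishes along the way that \emph{all} the clusterings $C_{t_s+1}$ eventually lie in $U_{x_*}$ and that $\|\nabla_x J(x_*,C_{t_s+1})\|\to 0$ (its equation \eqref{eq:grad-limit}), facts that are then reused verbatim in the proof of Lemma \ref{lm:Clust_fin_conv}, so adopting your proof would require re-deriving them there. One small caveat, shared with the paper: your parenthetical ``$g$ is a metric, hence continuous'' is really continuity of $g$ with respect to the Euclidean topology in which $x_{t_k}\to x_*$, which a metric does not satisfy automatically; the paper makes the same implicit assumption, so this is a matter of stating the hypothesis, not a gap in your argument.
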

\begin{proof}
Let $\{x_{t_s} \}_{s=0}^\infty$ be a convergent subsequence of $\{x_t \}$. Let $x_*$ be its limit point and assume the contrary, that $x_*$ is not a fixed point. By Definition $\ref{def:fix-pt}$, this means
\begin{equation*}
    \|\nabla_x J(x_*,C)\| > 0, \: \forall C \in U_{x_*}.
\end{equation*} As the number of possible clusterings is finite, we can define
\begin{equation}\label{eq:nonz-grad}
    \min_{C \in U_{x_*}}\| \nabla_x J(x_*,C) \| = \epsilon_1 > 0.
\end{equation}From the assumption $x_{t_s} \rightarrow x_*$, we have that, for a fixed $\delta_* > 0$, there exists a sufficiently large $s_0 > 0$, such that
\begin{equation*}
    \forall i \in [K], \: \forall s \geq s_0: \|x_{t_s}(i) - x_*(i)\| < \delta_*. 
\end{equation*} It then follows from the continuity of $g$ that there exists a sufficiently large $s_0 > 0$, such that $g(x_{t_s}(i),x_*(i)) < \epsilon_*$. Per Lemma \ref{lm:clust_convg}, we then have $C_{x_{t_s}+1} \in U_{x_{t_s}} \subset U_{x_*}$, $\forall s \geq s_0$. From (\ref{eq:nonz-grad}), we have
\begin{equation}\label{eq:contra}
    \| \nabla_x J(x_*,C_{t_s+1})\| \geq \epsilon_1, \: \forall s \geq s_0. 
\end{equation} Next, using the results established in Lemma \ref{lm:decr}, we have 
\begin{align*}
    \begin{aligned}
        J(x_{t+1}&,C_{t+1}) \leq J(x_t,C_t) - c(\alpha)\|\nabla_x J(x_t,C_{t+1}) \|^2 \\ &\leq J(x_{t-1},C_{t-1}) - c(\alpha)\|\nabla_x J(x_{t-1},C_t) \|^2 \\ &-c(\alpha) \|\nabla_x J(x_t,C_{t+1})\|^2 \leq \ldots \\ &\leq J(x_0,C_1) - c(\alpha)\sum_{r = 0}^t\|\nabla_x J(x_r,C_{r+1}) \|^2.
    \end{aligned}
\end{align*} Rearranging, we get
\begin{align}\label{eq:sum}
\begin{aligned}
    c(\alpha)\sum_{r = 0}^t\|\nabla_x J(x_r,C_{r+1}) \|^2 &\leq J(x_0,C_1) - J(x_{t+1},C_{t+1}) \\ &\leq J(x_0,C_1).
\end{aligned}
\end{align} Additionally, we have
\begin{equation}\label{eq:sum_subseq}
    \sum_{j=0}^{s(t)}\|\nabla_x J(x_{t_j},C_{t_j+1}) \|^2 \leq \sum_{j = 0}^t\|\nabla_x J(x_j,C_{j+1}) \|^2, 
\end{equation} where $s(t) = \sup\{j: t_j \leq t \}$. Combining (\ref{eq:sum}) and (\ref{eq:sum_subseq}), we get
\begin{equation}\label{eq:sum_subseq2}
    c(\alpha)\sum_{j=0}^{s(t)}\|\nabla_x J(x_{t_j},C_{t_j+1}) \|^2 \leq J(x_0,C_1).
\end{equation} Noting that the term on the right hand side of (\ref{eq:sum_subseq2}) is finite and independent of $t$, and $s(t) \rightarrow +\infty$ as $t \rightarrow +\infty$, we can take the limit as $t \rightarrow +\infty$, to obtain
\begin{align*}
    c(\alpha)\sum_{j = 0}^\infty \|\nabla_x J(x_{t_j},C_{t_j+1}) \|^2 \leq J(x_0,C_1) < +\infty, 
\end{align*} which implies
\begin{equation*}
    \lim_{s \rightarrow \infty} \|\nabla_x J(x_{t_s},C_{t_s+1}) \|^2 = 0.
\end{equation*} Fix an $\epsilon > 0$. By the definition of limits, there exists a $s_1 > 0$, such that
\begin{equation*}
    \|\nabla_x J(x_{t_s},C_{t_s+1}) \| < \epsilon, \: \forall s \geq s_1.
\end{equation*} On the other hand, from $x_{t_s} \rightarrow x_*$, there exists a $s_2 > 0$, such that
\begin{equation*}
    \|x_{t_s} - x_* \| < \epsilon, \: \forall s \geq s_2.
\end{equation*} As $C_{x_{t_s+1}} \in U_{x_{t_s}} \subset U_{x_*}$, $\forall s \geq s_0$, for any $s \geq \max\{s_0,s_1,s_2\}$, we have
\begin{align*}
    \|\nabla_x J&(x_*,C_{t_s+1}) \| \leq \|\nabla_x J(x_*,C_{t_s+1}) - \nabla_x J(x_{t_s},C_{t_s+1}) \| \\ &+ \| \nabla_x J(x_{t_s},C_{t_s+1}) \| \leq L \| x_* - x_{t_s} \| + \epsilon < (L + 1)\epsilon,
\end{align*} where we used the Lipschitz continuity of the gradients of $J$ in the second inequality. As $\epsilon > 0$ was arbitrarily chosen, we can conclude 
\begin{equation}
\label{eq:grad-limit}
    \| \nabla_x J(x_*,C_{t_s+1}) \| \rightarrow 0,
\end{equation} which clearly contradicts (\ref{eq:contra}). Hence, we can conclude that $x_*$ is a fixed point, i.e., 
\begin{equation*}
    \exists C \in U_{x_*}: \: \|\nabla_x J(x_*,C)\| = 0.
\end{equation*}
\end{proof}

The next lemma proves a stronger result, namely, that the clusters converge in finite time.

\begin{lemma}\label{lm:Clust_fin_conv}
For any convergent subsequence of the sequence of centers, $\exists s_0 > 0$, such that $\forall s \geq s_0: \: C_{t_s+1} \in \overline{U}_{x_*}$, where $x_*$ is the limit of the sequence $\{x_{t_s} \}$.
\end{lemma}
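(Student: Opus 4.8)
The plan is to combine the finite-cardinality of the partition space $\C_{K,\D}$ with the "stability" provided by Lemma \ref{lm:clust_convg} and the fixed-point characterization of Lemma \ref{lm:fix_pt}. Fix a convergent subsequence $\{x_{t_s}\}$ with limit $x_*$. By Lemma \ref{lm:fix_pt}, $x_*$ is a fixed point, so $\overline{U}_{x_*}\neq\emptyset$; I want to show that from some index on, the reassigned clusterings $C_{t_s+1}$ actually land in $\overline{U}_{x_*}$, not merely near it. First I would apply Lemma \ref{lm:clust_convg} at the center $x_*$: there is an $\epsilon>0$ such that whenever $\max_{i\in[K]} g(x_*(i),x'(i))<\epsilon$ we have $U_{x'}\subset U_{x_*}$. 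Since $x_{t_s}\to x_*$ and $g$ is continuous (indeed a metric, Assumption \ref{asmpt:met}), there is $s_1$ with $\max_i g(x_*(i),x_{t_s}(i))<\epsilon$ for all $s\ge s_1$, hence $U_{x_{t_s}}\subset U_{x_*}$ for such $s$. But the reassignment step (\ref{eq:reassign}) produces precisely a clustering $C_{t_s+1}\in U_{x_{t_s}}$ (it is by definition an optimal clustering with respect to the current centers), so $C_{t_s+1}\in U_{x_*}$ for all $s\ge s_1$. This already confines the clusterings to $U_{x_*}$; the remaining work is to upgrade $U_{x_*}$ to $\overline{U}_{x_*}$, i.e. to show $\nabla_x J(x_*,C_{t_s+1})=0$.

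For that I would argue by finiteness and contradiction. The set $U_{x_*}$ is finite (a subset of $\C_{K,\D}$, which is finite), so the quantity $m:=\min\{\,\|\nabla_x J(x_*,C)\| : C\in U_{x_*}\setminus\overline{U}_{x_*}\,\}$ is well-defined and strictly positive whenever the minimizing set is nonempty (if it is empty, then $U_{x_*}=\overline{U}_{x_*}$ and we are already done). Suppose, for contradiction, that infinitely many of the $C_{t_s+1}$ lie in $U_{x_*}\setminus\overline{U}_{x_*}$; pass to that sub-subsequence. By continuity of $\nabla_x J(\cdot,C)$ in its first argument (Lipschitz continuity from Remark \ref{rmk:Lischitz-grad}) and $x_{t_s}\to x_*$, we have $\|\nabla_x J(x_{t_s},C_{t_s+1})\|\to\|\nabla_x J(x_*,C_{t_s+1})\|$ up to the finitely-many-values argument; more precisely, since there are only finitely many possible values of $C_{t_s+1}$, along a further subsequence $C_{t_s+1}\equiv \widehat C$ is constant, with $\widehat C\in U_{x_*}\setminus\overline{U}_{x_*}$, so $\nabla_x J(x_{t_s},\widehat C)\to\nabla_x J(x_*,\widehat C)\neq 0$, hence $\|\nabla_x J(x_{t_s},\widehat C)\|$ is bounded below by a positive constant for large $s$.

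Now I invoke the descent machinery. From Lemma \ref{lm:decr} and its proof (inequality (\ref{eq:J-decr})), $J(x_{t+1},C_{t+1})\le J(x_t,C_t)-c(\alpha)\|\nabla_x J(x_t,C_{t+1})\|^2$ with $c(\alpha)>0$, and $\{J(x_t,C_t)\}$ is non-increasing; since $J\ge 0$ it converges, so the telescoped sum $\sum_t \|\nabla_x J(x_t,C_{t+1})\|^2<\infty$, forcing $\|\nabla_x J(x_t,C_{t+1})\|\to 0$ along the \emph{full} sequence, in particular along $\{t_s\}$. This contradicts the positive lower bound on $\|\nabla_x J(x_{t_s},\widehat C)\|$ obtained above. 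Therefore only finitely many $C_{t_s+1}$ can lie outside $\overline{U}_{x_*}$, i.e. there is $s_0\ge s_1$ with $C_{t_s+1}\in\overline{U}_{x_*}$ for all $s\ge s_0$, which is the claim.

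The main obstacle is the passage from "near $U_{x_*}$" to "in $\overline{U}_{x_*}$": a clustering in $U_{x_*}$ need not a priori have vanishing center-gradient at $x_*$, so one cannot read off $\overline{U}_{x_*}$ membership from the reassignment step alone. The resolution hinges on exploiting the summable-gradient consequence of the descent lemma together with the finiteness of $\C_{K,\D}$ (to replace "$\nabla_x J(x_{t_s},C_{t_s+1})\to 0$" by "$\nabla_x J(x_*,\widehat C)=0$" for each constant value $\widehat C$ attained infinitely often); care must be taken that the subsequence extraction (first for constancy of the clustering, then the contradiction) is done in the right order, and that continuity of $\nabla_x J(\cdot,C)$ is genuinely available, which it is by Remark \ref{rmk:Lischitz-grad}.
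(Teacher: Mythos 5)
Your proof is correct and follows essentially the same route as the paper's: confine $C_{t_s+1}$ to $U_{x_*}$ via Lemma \ref{lm:clust_convg}, use finiteness of $U_{x_*}\setminus\overline{U}_{x_*}$ to get a positive lower bound on $\|\nabla_x J(x_*,C)\|$ there, and contradict the vanishing of $\|\nabla_x J(x_*,C_{t_s+1})\|$ coming from the descent/summability argument (the paper's~\eqref{eq:grad-limit}). The only cosmetic difference is that you re-derive that gradient limit and pass to a constant-clustering sub-subsequence, whereas the paper cites the limit established in the proof of Lemma \ref{lm:fix_pt} and uses the uniform bound directly.
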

\begin{proof}
Let
\begin{equation*}
\delta:=\min_{C\in U_{x^\star}\setminus \overline{U}_{x^\star}} \|\nabla_x J(x^\star,C) \|.    
\end{equation*}
Note that, by construction of  $\overline{U}_{x^\star}$, it must be that $\|\nabla_x J(x^\star,C) \|>0$ for each $C\in U_{x^\star}\setminus \overline{U}_{x^\star}$, which together with the finiteness of $U_{x^\star}\setminus \overline{U}_{x^\star}$, implies $\delta>0$. 

For the sake of contradiction, suppose now that $C_{t_s+1} \in U_{x^\star}\setminus \overline{U}_{x^\star}$, infinitely often. Then, $\|\nabla_x J(x^\star,C_{t_s+1})\| \geq \delta$ infinitely often, which clearly contradicts~\eqref{eq:grad-limit}.   
\end{proof} 

The following lemma shows that the generated sequence of cluster centers stays bounded. 

\begin{lemma}\label{lm:bdd-seq}
The sequence of cluster centers $\{x_t \}$, generated by (\ref{eq:reassign})-(\ref{eq:grad_local}), is bounded.
\end{lemma}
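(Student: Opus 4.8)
The plan is to couple the monotone decrease of the objective (Lemma~\ref{lm:decr}) with the coercivity of $f$ (Assumption~\ref{asmpt:coerc}). Let $J_0$ denote the first, hence largest, element of the non‑increasing cost sequence $\{J(x_t,C_t)\}$ supplied by Lemma~\ref{lm:decr}, so that $J(x_t,C_t)\le J_0$ for every $t$; this is a finite number since $\D$ is finite and $f$ takes values in $\R_+$. Since $\D$ is finite and every weight is strictly positive, $p_{\min}:=\min_{y\in\D}p_y>0$. Because $f\ge 0$, for every index $i$ with $C_t(i)\ne\emptyset$ and every $y\in C_t(i)$ we get $p_y f(x_t(i),y)\le\sum_{y'\in C_t(i)}p_{y'}f(x_t(i),y')\le J(x_t,C_t)\le J_0$, hence $f(x_t(i),y)\le J_0/p_{\min}$. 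Note this bound involves $f$ evaluated at the \emph{updated} center $x_t(i)$, which is exactly the quantity that appears in $J(x_t,C_t)$.

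Next I would convert this sublevel‑set membership into a uniform norm bound. By Assumption~\ref{asmpt:coerc}, for each $y\in\D$ there is a radius $R_y>0$ such that $\|x\|>R_y$ forces $f(x,y)>J_0/p_{\min}$; equivalently the sublevel set $\{x:f(x,y)\le J_0/p_{\min}\}$ is contained in the ball of radius $R_y$. (Only coercivity is used here, not continuity: coercivity is precisely the statement that every sublevel set of $f(\cdot,y)$ is bounded.) Setting $R:=\max_{y\in\D}R_y$, a finite maximum over a finite set, the previous step yields: whenever $C_t(i)\ne\emptyset$, the center $x_t(i)$ belongs to the sublevel set associated with any $y\in C_t(i)$, so $\|x_t(i)\|\le R$.

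It remains to control centers of momentarily empty clusters. If $C_{t+1}(i)=\emptyset$, the sum in the update~(\ref{eq:grad_local}) is empty and $x_{t+1}(i)=x_t(i)$; that is, a center is frozen throughout any stretch of iterations in which its cluster is empty. By a one‑line induction on $t$ this gives $\|x_t(i)\|\le\max\{R,\|x_0(i)\|\}$ for all $t$ and all $i$: if cluster $i$ has been non‑empty at some iteration $\le t$, then $x_t(i)$ equals its value at the last such iteration and is bounded by $R$; otherwise $x_t(i)=x_0(i)$. Summing the squared block norms then gives $\|x_t\|\le\sqrt{K}\,\max\{R,\|x_0\|\}$ for all $t$, i.e.\ $\{x_t\}$ is bounded.

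The substantive content is the coercivity argument of the first two paragraphs; once Lemma~\ref{lm:decr} is in hand, the rest is bookkeeping. The one place that needs a little care is the empty‑cluster case: the naive claim ``each non‑empty cluster keeps its center in a bounded sublevel set'' does not on its own bound a center at an iteration where its cluster happens to be empty, and one must invoke the fact that such a center does not move to carry the bound forward from the last non‑empty iteration (or, failing that, from the initialization).
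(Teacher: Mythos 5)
Your proof is correct and rests on exactly the ingredients the paper uses: the monotone cost bound from Lemma~\ref{lm:decr}, coercivity of $f(\cdot,y)$ (equivalently, bounded sublevel sets), and the observation that a center whose cluster is empty does not move, so its bound is carried forward from the last non-empty iteration or from the initialization. The only difference is presentational — the paper argues by contradiction along a subsequence with $\|x_{t_s}(i)\|\to+\infty$ and traces back to the last non-empty time, whereas you give the direct quantitative version with an explicit radius $\max\{R,\|x_0\|\}$; the substance is the same.
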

\begin{proof}
By Lemma \ref{lm:decr}, we have
\begin{align}\label{eq:contr-setup}
\begin{aligned}
    J(x_{t+1},C_{t+1}) &\leq J(x_t,C_{t+1}) \leq \ldots \leq J(x_1,C_1) \\ &\leq J(x_0,C_1) < +\infty.
\end{aligned}
\end{align}
Recalling equation (\ref{eq:cost}), for $x\in \mathbb R^{K d}$ and a clustering $C$, we define
\begin{equation*}
    J_i(x(i),C(i)) = \sum_{y \in C(i)}p_y f(x(i),y),
\end{equation*} so that 
\begin{equation}\label{eq:J_sum}
    J(x,C) = \sum_{i = 1}^K J_i(x(i),C(i)).
\end{equation}

For the sake of contradiction, suppose that the sequence of centers $\{x_t\}$ is unbounded. This implies the existence of a cluster $k$ and a subsequence $t_s$ such that $\|x_{t_s}(i)\| \rightarrow +\infty$. For each $t_s$, let $\underline{t_s} = \max\{t \leq t_s: \: C_t(i) \ne \emptyset \}$, i.e., $\underline{t_s}$ is the largest element in the sequence prior to $t_s$, such that the $i$-th cluster is non-empty.   

Recalling the update rule (\ref{eq:grad_local}), it is not hard to see that $x_{t_s}(i) = x_{\underline{t_s}}(i)$, for all $s$, implying $\|x_{\underline{t_s}}(i)\| \rightarrow +\infty$.
By Assumption \ref{asmpt:coerc} and the fact that $C_{\underline{t_{s}}}(i)$ is nonempty for each $s$, we have
\begin{align}\label{eq:J_i-coerc}
    \begin{aligned}
        \lim_{\|x_{\underline{t_s}}(i) \| \rightarrow +\infty} J_k(x_{\underline{t_s}}(i),C_{\underline{t_s}}(i)) 
        =  +\infty.
    \end{aligned}
\end{align} Note that this is the case regardless of the clustering $C_{t_s}$, as the dataset $\D$ is finite, and therefore a bounded set. It is easy to see that unboundness of $J_i$ implies unboundedness of $J$, i.e.,  $\lim_{s \rightarrow +\infty} J(x_{\underline{t_s}}, C_{\underline{t_s}})=+\infty$. But this contradicts~\eqref{eq:contr-setup}, hence proving the claim of the lemma.   
\end{proof}

The next lemma shows that, if a point in the sequence of centers is sufficiently close to a fixed point, then all the subsequent points remain in the neighborhood of the fixed point.

\begin{lemma}\label{lm:final-step}
Let $\{x_t\}$ be the sequence of cluster centers generated by (\ref{eq:reassign})-(\ref{eq:grad_local}), with the step-size satisfying $\alpha < \frac{2}{L}$. Let $x_*$ be a fixed point, in the sense that $\overline{U}_{x_*} \ne \emptyset$. Then, $\exists \epsilon_{x_*} > 0$, such that, $\forall \epsilon \in (0,\epsilon_{x_*})$, $\exists t_{\epsilon} > 0$, such that, if $\|x_{t_0} - x_* \| \leq \epsilon,$ for some $t_0 > t_{\epsilon}$, then $\|x_t - x_* \| \leq \epsilon$, for all $t \geq t_0$.
\end{lemma}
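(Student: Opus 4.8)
The plan is to show that a sufficiently small Euclidean ball around $x_*$ is forward invariant under the iteration, and then conclude by induction on $t\ge t_0$. The driving fact is the co-coercivity estimate: if $C\in\C_{K,\D}$ is \emph{any} clustering with $\nabla_x J(x_*,C)=0$, then, writing $x^{+}=x-\alpha\nabla_x J(x,C)$ and using Assumption \ref{asmpt:co-coerc},
\begin{align*}
\|x^{+}-x_*\|^2 &= \|x-x_*\|^2 - 2\alpha\big\langle \nabla_x J(x,C)-\nabla_x J(x_*,C),\,x-x_*\big\rangle + \alpha^2\|\nabla_x J(x,C)\|^2 \\
&\le \|x-x_*\|^2 - \alpha\Big(\tfrac{2}{L}-\alpha\Big)\|\nabla_x J(x,C)\|^2 \;\le\; \|x-x_*\|^2,
\end{align*}
the last step because $\alpha<2/L$. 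Hence, if every clustering $C_{t+1}$ generated by (\ref{eq:reassign}) satisfies $\nabla_x J(x_*,C_{t+1})=0$, the update (\ref{eq:grad_local}) is non-expansive towards $x_*$ and the claim follows immediately by induction.

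The obstacle is that $x_*$ may admit assignment ties, so that (\ref{eq:reassign}) might output a clustering $C$ with $\nabla_x J(x_*,C)\ne 0$. I would rule this out asymptotically via the cost-decrease mechanism of Lemma \ref{lm:decr}. Since $\C_{K,\D}$ is finite, the set $\mathcal B:=\{C\in\C_{K,\D}:\nabla_x J(x_*,C)\ne 0\}$ is finite and $\delta:=\min_{C\in\mathcal B}\|\nabla_x J(x_*,C)\|>0$ (if $\mathcal B=\emptyset$ the first paragraph already finishes the proof). By continuity of $x\mapsto\nabla_x J(x,C)$ (Remark \ref{rmk:Lischitz-grad}), taken over the finite family $\mathcal B$, there is $\rho>0$ with $\|\nabla_x J(x,C)\|>\delta/2$ whenever $\|x-x_*\|<\rho$ and $C\in\mathcal B$. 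Consider the ``bad-event'' set $S:=\{t\in\N:\ \|x_t-x_*\|<\rho\ \text{and}\ C_{t+1}\in\mathcal B\}$. For each $t\in S$, inequality (\ref{eq:J-decr}) gives $J(x_{t+1},C_{t+1})\le J(x_t,C_t)-c(\alpha)(\delta/2)^2$ with $c(\alpha)=\alpha(1-\alpha L/2)>0$, while the sequence $\{J(x_t,C_t)\}$ is non-increasing and bounded below by $0$ (Lemma \ref{lm:decr}; note $J\ge 0$ as $f\ge 0$, and $J(x_0,C_0)<\infty$ since $\D$ is finite). Telescoping then yields $|S|\,c(\alpha)(\delta/2)^2\le J(x_0,C_0)<\infty$, so $S$ is a finite set.

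Finally I would assemble the constants: let $\rho'>0$ be a radius with $\|x-x_*\|<\rho'\Rightarrow U_x\subset U_{x_*}$, which Lemma \ref{lm:clust_convg} provides (its $g$-ball contains a Euclidean ball, $g$ being continuous for the losses considered); put $\epsilon_{x_*}:=\min\{\rho,\rho'\}$, and set $t_\epsilon:=1+\max S$ (or $t_\epsilon:=0$ if $S=\emptyset$). Given $\epsilon\in(0,\epsilon_{x_*})$ and $t_0>t_\epsilon$ with $\|x_{t_0}-x_*\|\le\epsilon$, induct on $t\ge t_0$: assuming $\|x_t-x_*\|\le\epsilon<\epsilon_{x_*}$, we have $t>t_\epsilon\ge\max S$ and $\|x_t-x_*\|<\rho$, so $t\notin S$ forces $C_{t+1}\notin\mathcal B$, i.e.\ $\nabla_x J(x_*,C_{t+1})=0$; moreover $C_{t+1}\in U_{x_t}\subset U_{x_*}$, hence $C_{t+1}\in\overline{U}_{x_*}$ by Definition \ref{def:Ubar}. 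The co-coercivity estimate above then gives $\|x_{t+1}-x_*\|\le\|x_t-x_*\|\le\epsilon$, closing the induction. The only genuinely delicate point is the middle paragraph --- taming the tie-induced ``wrong'' clusterings through summability of the per-step decrease of $J$; the rest is routine $\epsilon$-bookkeeping layered on Lemmas \ref{lm:decr} and \ref{lm:clust_convg}.
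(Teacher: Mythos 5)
Your proposal is correct, and it rests on the same two pillars as the paper's proof: the per-step decrease of $J$ from Lemma \ref{lm:decr} (plus finiteness of the set of clusterings, which gives a positive floor on $\|\nabla_x J(x_*,C)\|$ over the ``bad'' clusterings) to show that bad assignments near $x_*$ must eventually stop occurring, and the co-coercivity computation (identical to the paper's display (\ref{eq:interest})) to get non-expansiveness of the update toward $x_*$ once $\nabla_x J(x_*,C_{t+1})=0$. Where you differ is the middle step: the paper fixes the limit $J_*=\lim_t J(x_t,C_t)$, chooses $t_\epsilon$ so that $J(x_t,C_t)\le J_*+\tfrac{c(\alpha)}{2}(\epsilon_1-L\epsilon)^2$, and derives a contradiction because a bad clustering at an $\epsilon$-close iterate would force the cost strictly below $J_*$; you instead run a counting argument, telescoping the descent inequality to show the set $S$ of bad events is finite and taking $t_\epsilon$ past $\max S$. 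The two mechanisms are equivalent in spirit (both exploit monotone convergence of the cost), but yours yields a $t_\epsilon$ that is uniform in $\epsilon$, whereas the paper's threshold depends on $\epsilon$ through $(\epsilon_1-L\epsilon)^2$; also, since your set $\mathcal B$ ranges over all clusterings rather than $U_{x_*}\setminus\overline U_{x_*}$, the invocation of Lemma \ref{lm:clust_convg} is not actually needed for the bound $\|x_{t+1}-x_*\|\le\|x_t-x_*\|$ (only to additionally conclude $C_{t+1}\in\overline U_{x_*}$), while in the paper's argument it is essential for invoking $\epsilon_1$. Only trivial bookkeeping quibbles remain (e.g., the initial cost should be $J(x_0,C_1)$, as $C_0$ is not defined by the algorithm).
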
 
\begin{proof}
Recall that, by Lemma \ref{lm:decr}, the sequence of costs $\{J(x_t,C_t) \}_{t \geq 0}$ is decreasing. Moreover, since $J(x,C) \geq 0$, we know that the limit of the sequence of costs exists and is finite. Let
\begin{equation}\label{eq:J*def}
    J_* = \lim_{t \rightarrow \infty}J(x_t,C_t).
\end{equation} By assumption, $\overline{U}_{x_*} \ne \emptyset$. From the definition of $\overline{U}_{x_*}$, for all $C \in U_{x_*} \setminus \overline{U}_{x_*}$ we have
\begin{equation}\label{eq:grad-nonz}
    \|\nabla_x J(x_*,C) \| > 0.
\end{equation} As $U_{x_*}$ is a finite set, we can define
\begin{equation*}
    \epsilon_1 = \min_{C \in U_{x_*} \setminus \overline{U}_{x_*}} \|\nabla_x J(x_*,C)\| > 0.
\end{equation*} Let $\epsilon_* > 0$ be such that Lemma \ref{lm:clust_convg} holds. From the continuity of $g$, we have
\begin{equation}\label{eq:delta}
    \exists \delta_* > 0 \: \forall x: \|x - x_*\| < \delta_* \implies g(x,x_*) < \epsilon_* .
\end{equation} Define
\begin{equation}\label{eq:eps*}
    \epsilon_{x_*} = \min\bigg\{\delta_*, \frac{\epsilon_1}{L} \bigg\}. 
\end{equation} For an arbitrary $\epsilon \in (0,\epsilon_{x_*})$, let $t_0 > 0$ be such that 
\begin{equation}\label{eq:condition}
    J(x_t,C_t) \leq J_* + \frac{c(\alpha)}{2}(\epsilon_1 - L\epsilon)^2, \: \forall t \geq t_0,
\end{equation} with $c(\alpha)$ defined as in Lemma \ref{lm:decr}. Note that the choice of $t_0$ is possible, from (\ref{eq:J*def}) and the fact that $(\epsilon_1 - L\epsilon)^2 > 0$. Our goal now is to show that, for a fixed $\epsilon\in (0,\epsilon_{x_*})$, if for some $t: \: t \geq t_0$ and $\|x_t - x_*\| < \epsilon$, then $\|x_{t+1} - x_* \| < \epsilon$.

First note that, if $t \geq t_0$ and $\|x_t - x_*\| < \epsilon$, it holds that $C_{t+1} \in \overline{U}_{x_*}$. To see this, assume the contrary, $\| x_t - x_* \| < \epsilon$ and $C_{t+1} \notin \overline{U}_{x_*}$. It follows from (\ref{eq:eps*}) that
\begin{equation*}
    \| x_t - x_* \| < \delta_*.
\end{equation*} From (\ref{eq:delta}) and Lemma \ref{lm:clust_convg}, we then have $U_{x_t} \subset U_{x_*}$, and hence, $C_{t+1} \in U_{x_*}$. Using Lipschitz continuity of gradients of $J$, we get
\begin{equation}\label{eq:step1}
    \|\nabla_x J(x_t,C_{t+1}) - \nabla_x J(x_*,C_{t+1}) \| \leq L\|x_t - x_*\| \leq L\epsilon.
\end{equation} As $C_{t+1} \notin \overline{U}_{x_*}$, from (\ref{eq:grad-nonz}), we have
\begin{equation}\label{eq:step2}
    \|\nabla_x J(x_*,C_{t+1}) \| \geq \epsilon_1.
\end{equation} Applying the triangle inequality, (\ref{eq:step1}) and (\ref{eq:step2}), we get 
\begin{equation}\label{eq:above}
    \|\nabla_x J(x_t,C_{t+1}) \| \geq \epsilon_1 - L\epsilon.
\end{equation} Note that, by (\ref{eq:eps*}), the right-hand side of (\ref{eq:above}) is positive. Combining (\ref{eq:J-decr}), (\ref{eq:condition}) and (\ref{eq:above}), we have
\begin{align*}
    \begin{aligned}
        J(x_{t+1},C_{t+1}) &\leq J(x_t,C_t) - c(\alpha)\|\nabla_x J(x_t,C_{t+1}) \|^2 \\ &\leq J_* + \frac{c(\alpha)}{2}(\epsilon_1 - L\epsilon)^2 - c(\alpha)\| \nabla_x J(x_t,C_{t+1}) \|^2 \\ &\leq J_* + \frac{c(\alpha)}{2}(\epsilon_1 - L\epsilon)^2 - c(\alpha)(\epsilon_1 - L\epsilon)^2 \\ &< J_*, 
    \end{aligned} 
\end{align*} which is a contradiction. Hence, $C_{t+1} \in \overline{U}_{x_*}$. 

Using Assumption \ref{asmpt:co-coerc}, the update rule (\ref{eq:grad_global}), and the fact that $C_{t+1} \in \overline{U}_{x_*}$, we have
\begin{align}\label{eq:interest}
    \begin{aligned}
        \|x_{t+1} - x_* \|^2 &= \|x_t - \alpha \nabla_x J(x_t,C_{t+1}) - x_* \|^2 \\ &= \|x_t - x_* \|^2 + \alpha^2\|\nabla_x J(x_t,C_{t+1}) \|^2 \\ &- 2\alpha\langle \nabla_x J(x_t,C_{t+1}), x_t - x_* \rangle \\ &\leq \|x_t - x_* \|^2 - \alpha\Big(\frac{2}{L} - \alpha \Big)\|\nabla_x J(x_t,C_{t+1}) \|^2 \\ &\leq \|x_t - x_*\|^2 < \epsilon^2,
    \end{aligned}
\end{align} where the second inequality follows from the step-size choice $\alpha < \frac{2}{L}$. Therefore, we have shown that
\begin{equation*}
    \| x_t - x_* \| < \epsilon \implies \| x_{t+1} - x_* \| < \epsilon.
\end{equation*} The same result holds for all $s > t$ inductively, which proves the claim.
\end{proof}

We are now ready to prove Theorem~\ref{thm:convergence}.

\begin{proof}[Proof of Theorem~\ref{thm:convergence}]
By Lemma \ref{lm:decr} and the fact that the corresponding sequence of costs $\{J(x_t,C_t)\}$ is nonnegative, we know this sequence converges to some $J_* \in \mathbb{R}_+$, by the monotone convergence theorem. On the other hand, by Bolzano-Weierstrass theorem and Lemma \ref{lm:bdd-seq}, the sequence $\{x_t\}$ has a convergent subsequence, $\{x_{t_s}\}$, with some $x_* \in \mathbb{R}^{Kd}$ as its limit. From the continuity of $J$ and convergence of $\{x_{t_s}\}$, we can then conclude that $J_* = \lim_{s \rightarrow +\infty}J(x_{t_s},C_{t_s}) = J(x_*, C_*)$, for some $C_* \in U_{x_*}$. Lemma \ref{lm:fix_pt} then implies that $x_*$ is a fixed point. Finally, Lemmas \ref{lm:Clust_fin_conv} and \ref{lm:final-step} imply the convergence of the entire sequence $\{x_t\}$ to $x_*$.
\end{proof}

\begin{remark}
We note that the convergence guarantees of our method are independent of the initialization. Therefore, our method is amenable to seeding procedures, such as $K$-means++.
\end{remark}

\section{Fixed point analysis}\label{sec:fixed_pt}

In this section we analyse the fixed points and their properties. To begin with, we formally define the notion of Voronoi partitions, e.g., \cite{Okabe2000SpatialTC}.

\begin{definition}\label{def:Voronoi}
Let $(V,d)$ be a metric space. For a set $X \subset V$, and $z = (z(1),\ldots,z(K)) \in V^K$, we say that $P = (P(1),\ldots,P(K))$ is a Voronoi partition of the set $X$, generated by $z$, with respect to the metric $d$, if $P$ is a partition of $X$ and additionally, for every $i \in [K]$
\begin{equation*}
    P(i) = \left\{x \in X: d(z(i),x) \leq d(z(j),x), \: \forall j \neq i \right\}.
\end{equation*}
\end{definition}

From Definitions \ref{def:fix-pt} and \ref{def:Voronoi}, it is clear that, for a fixed point $(x_*,C_*)$, the clustering $C_*$ represents a Voronoi partition of $\D$, with respect to $g$, generated by $x_*$. Moreover, from Definition \ref{def:U}, it is clear that, for any point $x$, the set $U_x$ represents the set of all possible Voronoi partitions of $\D$, generated by $x$. 

From the cluster reassignment step (\ref{eq:reassign}), we can see that in our approach, the clusters represent Voronoi partitions with respect to $g$. It is known that different distance metrics induce different Voronoi partitions, e.g., \cite{Okabe2000SpatialTC}, and the choice of metrics affects the shape of the resulting partitions. For example, choosing $g_1(x,y) = \|x - y\|$, the standard Euclidean distance and $g_2(x,y) = \|x - y\|_A$, a Mahalanobis distance (see (\ref{eq:Mahalanobis}) ahead), would potentially result in different Voronoi partitions of the dataset. In that sense, the distance function $g$ determines the \textit{cluster shape}.   

Using (\ref{eq:grad_J}), the fixed point condition from Definition \ref{def:fix-pt} is equivalent to
\begin{align}\label{eq:zero-grad-comp}
    \begin{aligned}
        &\forall i \in [K]: \nabla_x J_i(x_*(i),C_*(i)) = 0 \iff \\ &\forall i \in [K]: \sum_{y \in C(i)}p_y\nabla_x f(x_*(i),y) = 0.
    \end{aligned}
\end{align} From (\ref{eq:zero-grad-comp}), we can see that the exact location of a cluster center is determined by $f$. In that sense, the cost function $f$ determines the \textit{location of cluster centers}. For example, for the choice $g(x,y) = \|x - y\|$, $f_1(x,y) = \frac{1}{2}\|x - y\|^2$ and $f_2(x,y) = \phi_{\delta}(g(x,y))$, where $\phi_{\delta}$ is the Huber loss defined in (\ref{eq:Huber}), we can see that in both cases the cluster shapes will be determined by the Euclidean distance metric. However, applying (\ref{eq:zero-grad-comp}) to $f_1$ and $f_2$, it can be shown that
\begin{align*}
    x_1(i) &= \frac{1}{\mu_{\D}(C_1(i))}\sum_{y \in C_1(i)}p_y y, \\
    x_2(i) &= \frac{\sum_{y \in \underline{C_2}(i)}p_y y + \sum_{y \in \overline{C_2}(i)}\frac{\delta}{\|x_2(i) - y\|}p_y y}{\sum_{y \in \underline{C_2}(i)}p_y + \sum_{y \in \overline{C_2}(i)}\frac{\delta}{\|x_2(i) - y\|}p_y},
\end{align*} where $\overline{C_2}(i) = \{y \in C_2(i): \|x_2(i) - y\| > \delta \} $, $\underline{C_2}(i) = \{y \in C_2(i): \| x_2(i) - y \| \leq \delta \}$, $x_1(i)$ and $x_2(i)$ satisfy (\ref{eq:zero-grad-comp}) for $f_1$ and $f_2$ respectively, and $\mu_{\D}(C(i)) = \sum_{y \in C(i)}p_y$, represents the measure of the $i$-th cluster. Hence, we see that the function $f$ dictates the exact location of the cluster center within the cluster. 

\begin{remark}
Note that, while a fixed point of Huber loss takes the form of $x_2(i)$, as defined above, it is not actually a trivially computable closed form solution, as both sides of the equality contain $x_2(i)$. Therefore, to obtain such a form in practice, an iterative solver is required. 
\end{remark}

\subsection{Case study: Centroidal Voronoi Partitions}

A Voronoi partition $C$ of the set $\D$ generated by $x$ is called centroidal, if the generator of each partition corresponds to its center, i.e.
\begin{equation*}
    x(i) = \frac{1}{\mu_{\D}(C(i))}\sum_{y \in C(i)}p_yy, \: \forall i \in [K].
\end{equation*} The authors in \cite{JMLR:v6:banerjee05b} show that, if the cost function $f$ is a Bregman divergence, the Lloyd-type algorithm \cite{Lloyd} is optimal, i.e., using centroidal Voronoi partitions results in the minimal loss in Bregman information. In what follows, we show that, for a Bregman divergence-type cost function, our algorithm converges to the set of centroidal Voronoi partitions. To this end, we first define the notion of Bregman divergence.  

\begin{definition}\label{def:Bregman}
Let $\phi: \mathbb{R}^d \mapsto \mathbb{R}$ be a strictly convex, differentiable function. The Bregman divergence defined by $\phi$ is given by $d_{\phi}(p,q) = \phi(p) - \phi(q) - \langle \nabla \phi(q), p-q \rangle$.
\end{definition}

As a consequence of strict convexity of $\phi$, we have $d_{\phi} \geq 0$, and $d_{\phi}(p,q) = 0 \iff p = q$. However, in general, $d_{\phi}$ is not a metric. Therefore, in our framework, Bregman divergences are used as $f(x,y) = d_{\phi}(y,x)$. To define an appropriate metric $g$, we rely on the works \cite{bregman_triangle}, \cite{chen_bregman_metrics}, that show a rich class of Bregman divergences that represent squares of metrics. Examples include Mahalanobis distance based Bregman divergences, as well as the Jensen-Shannon entropy. We show in the Appendix that, on a properly defined support, the Jensen-Shannon entropy satisfies Assumptions \ref{asmpt:g&f}-\ref{asmpt:co-coerc}. Here, we define the Mahalanobis distance based Bregman divergences and show how they fit our framework. Let $A \in \mathbb{R}^{d \times d}$ be a symmetric positive definite matrix. The corresponding Bregman divergence is then given by 
\begin{equation}\label{eq:Mahalanobis}
    d_{\phi}(x,y) = \frac{1}{2}(x-y)^TA(x-y).
\end{equation} This class of Bregman divergences is covered by our formulation, for the choice
\begin{align*} 
    f(x,y) &= \frac{1}{2}(x-y)^TA(x-y), \\
    g(x,y) &= \|x - y\|_A,
\end{align*} where $\|x\|_A \coloneqq \sqrt{\langle Ax, x \rangle}$.

\begin{lemma}\label{lm:Bregman}
Let f be a Bregman divergence, satisfying Assumptions \ref{asmpt:g&f}-\ref{asmpt:co-coerc}. Then, the gradient clustering algorithm converges to the set of centroidal Voronoi partitions. 
\end{lemma}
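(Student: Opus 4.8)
The plan is to combine the general convergence result (Theorem \ref{thm:convergence}) with the specific structure of Bregman divergences that represent squares of metrics. By Theorem \ref{thm:convergence}, the sequence $\{x_t\}$ converges to a fixed point $x_*$, meaning $\overline{U}_{x_*} \ne \emptyset$; pick any $C_* \in \overline{U}_{x_*}$. The pair $(x_*, C_*)$ then satisfies Definition \ref{def:fix-pt}, so in particular $\nabla_x J(x_*, C_*) = 0$, which by (\ref{eq:zero-grad-comp}) means $\sum_{y \in C_*(i)} p_y \nabla_x f(x_*(i), y) = 0$ for every $i \in [K]$. The whole argument reduces to showing that, when $f(x,y) = d_\phi(y,x)$ for a Bregman divergence $d_\phi$, this stationarity condition forces $x_*(i)$ to equal the centroid $\frac{1}{\mu_{\D}(C_*(i))} \sum_{y \in C_*(i)} p_y y$, and that $C_*$ is then genuinely a centroidal Voronoi partition in the sense defined just before the lemma.

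First I would compute $\nabla_x f(x,y)$ explicitly for $f(x,y) = d_\phi(y,x) = \phi(y) - \phi(x) - \langle \nabla \phi(x), y - x \rangle$. Differentiating with respect to $x$, the $\phi(y)$ term vanishes, $-\nabla \phi(x)$ comes from the second term, and the product-rule derivative of $-\langle \nabla \phi(x), y - x \rangle$ gives $-\nabla^2 \phi(x)(y-x) + \nabla \phi(x)$, so the $\nabla \phi(x)$ terms cancel and $\nabla_x f(x,y) = -\nabla^2 \phi(x)(y - x)$ (this is the standard identity underlying the Bregman-Lloyd optimality proof in \cite{JMLR:v6:banerjee05b}). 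Substituting into the stationarity condition at cluster $i$ yields $\nabla^2 \phi(x_*(i)) \sum_{y \in C_*(i)} p_y (y - x_*(i)) = 0$. Since $\phi$ is strictly convex, $\nabla^2 \phi(x_*(i))$ is positive definite, hence invertible, so we may cancel it to obtain $\sum_{y \in C_*(i)} p_y (y - x_*(i)) = 0$, i.e. $x_*(i) = \frac{1}{\mu_{\D}(C_*(i))} \sum_{y \in C_*(i)} p_y y$, which is exactly the centroidal condition. (One must note $\mu_{\D}(C_*(i)) > 0$; I would point out that any empty cluster can be discarded or handled trivially, or invoke whatever convention the paper uses so that every cluster carries positive mass.)

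It remains to argue that $C_*$ is a Voronoi partition of $\D$ generated by $x_*$ with respect to the metric $g$. This is immediate from Definition \ref{def:fix-pt} part 1 together with Definition \ref{def:Voronoi}: the fixed-point condition (\ref{eq:opt_clust}) says each $y \in C_*(i)$ satisfies $g(x_*(i), y) \le g(x_*(j), y)$ for all $j \ne i$, which is precisely the defining property of the Voronoi cell $P(i)$; since the $C_*(i)$ also partition $\D$, $C_*$ is a Voronoi partition. Combined with the centroidal property derived above, $C_*$ is a centroidal Voronoi partition, and since this holds for the limit of $\{x_t\}$ under arbitrary initialization, the algorithm converges to the set of centroidal Voronoi partitions. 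I do not anticipate a serious obstacle here; the only mild subtlety is making sure the Bregman divergence in question actually satisfies Assumptions \ref{asmpt:g&f}--\ref{asmpt:co-coerc} (so that Theorem \ref{thm:convergence} applies), which the lemma hypothesis already grants, and handling the invertibility of $\nabla^2 \phi$ cleanly — the strict convexity in Definition \ref{def:Bregman} is exactly what is needed, though one should be slightly careful that strict convexity gives positive definiteness of the Hessian only under $C^2$ regularity, so I would either assume $\phi \in C^2$ (as is implicit when one writes $\nabla^2 \phi$) or phrase the cancellation via the injectivity of $\nabla \phi$ instead.
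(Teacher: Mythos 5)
Your proposal is correct and follows essentially the same route as the paper's proof: compute $\nabla_x f(x,y) = \nabla^2\phi(x)(x-y)$, substitute into the stationarity condition (\ref{eq:zero-grad-comp}), cancel the Hessian by strict convexity of $\phi$ to obtain the centroid formula, and note that the Voronoi property of $C_*$ is immediate from Definition \ref{def:fix-pt}. Your added remarks (invoking Theorem \ref{thm:convergence} explicitly, positive mass of clusters, and the $C^2$/positive-definiteness caveat) only make explicit points the paper leaves implicit.
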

\begin{proof}
To this end, we want to show that, for an arbitrary fixed point $(x_*,C_*)$ of the algorithm, the pair produces a centroidal Voronoi partition.

From Definition \ref{def:fix-pt}, it is clear that $C_*$ is a Voronoi partition of the dataset, generated by $x_*$. Now, let $f(x,y)$ be a Bregman divergence, for some strictly convex $\phi$. From the definition of Bregman divergence, we then have
\begin{align*}
    \nabla_x f(x,y) &= -\nabla \phi(x) + \nabla \phi(x) - \nabla^2 \phi(x)(y-x) \\ &= \nabla^2 \phi(x)(x - y). 
\end{align*} Combining with (\ref{eq:zero-grad-comp}), we get, for all $i \in [K]$
\begin{align*}
        0 &= \sum_{y \in C_*(i)}p_y\nabla_x f(x_*(i),y) \\ &= \nabla^2 \phi(x_*(i))\bigg(\sum_{y \in C_*(i)}p_y(x_*(i) - y)\bigg).
\end{align*} From the strict convexity of $\phi$, we have 
\begin{align}\label{eq:Bregm_fixed_pt}
    \begin{aligned}
        &\sum_{y \in C_*(i)}p_y\nabla_x f(x_*(i),y) = 0 \\ &\iff \sum_{y \in C_*(i)}p_y(x_*(i) - y) = 0 \\ &\iff  x_*(i) = \frac{1}{\mu_{\D}(C_*(i))}\sum_{y \in C_*(i)}p_yy.
    \end{aligned}
\end{align} We have shown that the generators of Voronoi partitions correspond to their respective centers, which completes the proof.
\end{proof}

\subsection{Case study: Beyond Centroidal Voronoi Partitions}

Note that, in the case the cost used is a Bregman distance, the fixed point has a closed-form solution (\ref{eq:Bregm_fixed_pt}). Therefore, in each iteration of the algorithm, it is possible to compute the optimal cluster center, which is exactly what the Lloyd algorithm does. The Lloyd algorithm \cite{Lloyd}, and its generalization \cite{JMLR:v6:banerjee05b}, perform the following two steps:
\begin{enumerate}
    \item \textit{Cluster reassignment}: for each $y \in \D$, find the cluster center $i \in [K]$, such that
    \begin{equation*}
        g(x_t(i),y) \leq g(x_t(j),y), \forall j \neq i,
    \end{equation*} and assign the point $y$ to cluster $C_{t+1}(i)$.
    \item \textit{Center update}: for each $i \in [K]$, perform the following update
    \begin{equation}\label{eq:Lloyd}
        x_{t+1}(i) = \frac{1}{\mu_{\D}(C_{t+1}(i))}\sum_{y \in C_{t+1}(i)}p_y y.
    \end{equation}
\end{enumerate}

The authors in \cite{bottou_kmeans} analyze the update rule (\ref{eq:Lloyd}) and show that it corresponds to performing a Newton step in each iteration. The authors in \cite{JMLR:v6:banerjee05b} show an even stronger result - in the case $f$ is a Bregman divergence, the update (\ref{eq:Lloyd}) corresponds to the optimal update, in terms of minimizing the Bregman information.  

From that perspective, naively extending the Lloyd's algorithm to a general cost $f$ would correspond to
\begin{equation}\label{eq:exact_min}
    x_{t+1}(i) = \argmin_{x(i)} \sum_{y \in C_{t+1}(i)}\frac{p_y}{\mu_\D(C_{t+1}(i))} f\big(x(i),y\big).
\end{equation} Performing the update (\ref{eq:exact_min}) would require solving an optimization problem in each iteration. This computation might be prohibitively expensive. In this case, the update (\ref{eq:grad_local}) is preferred, as computing the gradient is a feasible, and in many cases cheap operation. 

An example of such a function is the Huber loss, defined in (\ref{eq:Huber}). Huber loss provides robustness, e.g., \cite{Ke2005RobustL1}, \cite{liu2019robust}, as it behaves like the squared loss for points whose modulus is smaller than a given threshold, while it grows only linearly for points whose modulus is beyond the threshold. Therefore, Huber loss implicitly gives more weight to points with smaller modulus.  

In our framework, Huber loss is used as
\begin{equation}\label{eq:Huber2}
    f(x,y) = \phi_{\delta}(\|x - y\|) = \begin{cases}
        \frac{1}{2}\|x - y\|^2, &\|x-y\| \leq \delta \\
        \delta \|x-y\| - \frac{\delta^2}{2}, &\|x-y\| > \delta
    \end{cases}.
\end{equation} A closed form expression satisfying (\ref{eq:exact_min}), for the cost (\ref{eq:Huber2}) does not exist. Therefore, to perform the update (\ref{eq:exact_min}) in practice, requires solving an optimization problem in every iteration. On the other hand, from (\ref{eq:Huber}) and (\ref{eq:Huber2}), we have 
\begin{align*}
    \nabla_x f(x,y) = \begin{cases}
        (x-y), &\|x - y\| \leq \delta \\
        \delta\frac{x-y}{\|x-y\|}, &\|x-y\| > \delta
    \end{cases},
\end{align*} hence the gradient update is straightforward to compute. Note that computing the gradient update of the Huber loss corresponds to performing gradient clipping, effectively dampening the contribution of points that are far away from the current center estimate. We show in the Appendix that Huber loss satisfies Assumptions \ref{asmpt:g&f}-\ref{asmpt:co-coerc}.

\section{Numerical experiments}\label{sec:num}

In this section we demonstrate the effectiveness of the proposed method. The experiments presented in this section were performed on the MNIST \cite{lecun-mnisthandwrittendigit-2010} and Iris \cite{iris} datasets. Throughout the experiments, we assume a uniform distribution over the data, i.e., $\mu_{\D}(y_i) = \frac{1}{N}, \: \forall i = 1,\ldots,N$, with $\D = \{y_1,\ldots,y_N\}$.

The MNIST training dataset consists of handwritten digits, along with the corresponding labels. The data is initially normalized (divided by the highest value in the dataset), so that each pixel belongs to the $[0,1]$ interval. Next, we select the first 500 samples of the digits $1$ through $7$. In total, our dataset consists of $N = 3500$ points, each being in $[0,1]^{768}$ (as there are $28 \times 28$ pixels), with the number of underlying clusters $K = 7$. The Iris dataset consists of three species of the \emph{Iris} flower, \emph{Iris setosa}, \emph{Iris virginica} and \emph{Iris versicolor}, along with the corresponding labels. Each of the species has 50 samples, so that the total number of samples is 150. Each sample consists of 4 features, being the length and the width of the sepals and petals of the flowers. In total, the dataset consists of $N = 150$ points, with the number of underlying clusters $K = 3$.

For the first experiment, we utilised the gradient based clustering using the standard squared Euclidean cost. In our setup, that corresponds to:
$f(x,y) = \frac{1}{2}\|x - y\|^2$, $g(x,y) = \|x - y \|$. We refer to the resulting method as gradient $K$-means and compare it with the standard $K$-means \cite{Lloyd}, \cite{JMLR:v6:banerjee05b}. We set the step-size equal to $\alpha = \frac{1}{N}$, which results in $\alpha = \frac{1}{3500}$ for MNIST and $\alpha = \frac{1}{150}$ the Iris experiments. For a fair comparison, we set the initial centers of both methods to be the same. In particular, we take a random point from each class  and set them as the initial centroids. 

We run the clustering experiments for 20 times and present the mean performance (solid line), as well as the standard deviation (shaded region). The measure of performance used is the fraction of correctly clustered samples. Note that both methods are unsupervised, i.e., do not use labels when learning. However, we used the labels as ground truth, when comparing the clustering results. In order to account for a possible label mismatch, we checked all the possible label permutations when computing the clustering accuracy and chose the highest score as the true score. The results for MNIST and Iris datasets are presented in Figures \ref{fig:baseline} and \ref{fig:iris_kmean}, respectively. 

\begin{figure}[htp]
    \centering
    \includegraphics[width=8cm]{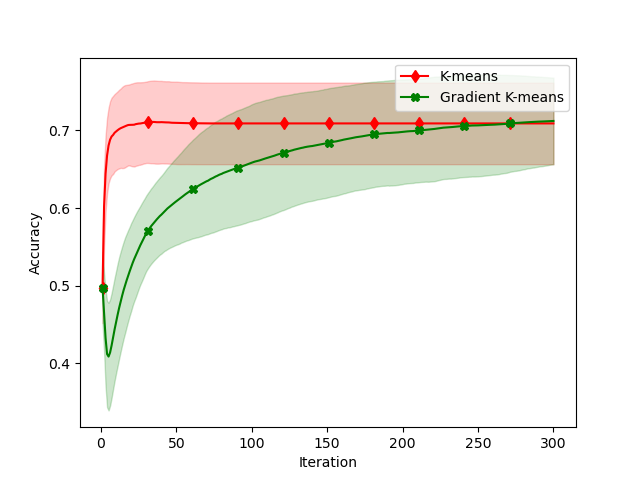}
    \caption{Accuracy of the Lloyd based $K$-means vs the gradient based $K$-means algorithm. Presents the accuracy of clustering digits $1$ through $7$ from the MNIST dataset.}
    \label{fig:baseline}
\end{figure}

\begin{figure}[htp]
    \centering
    \includegraphics[width=8cm]{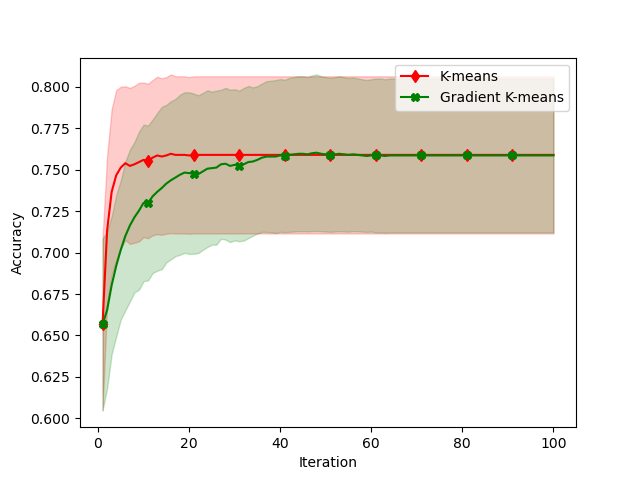}
    \caption{Accuracy of the Lloyd based $K$-means vs the gradient based $K$-means algorithm. Presents the accuracy of clustering flowers from the Iris dataset.}
    \label{fig:iris_kmean}
\end{figure}

Figure~\ref{fig:baseline} shows that accuracy-wise, the gradient based $K$-means slightly outperforms the standard $K$-means. Speed-wise, the standard $K$-means update converges faster, which is to be expected, as the $K$-means update corresponds to performing the exact $\argmin$ step in each iteration. Figure~\ref{fig:iris_kmean} shows that accuracy-wise, the gradient based $K$-means performs identically to the standard $K$-means, at a negligible speed loss. 

For the second experiment, we added zero mean Gaussian noise to a fraction of points from all classes, thus introducing noise. In order to combat the noise, we use a Huber loss function for our gradient based clustering method. 
In our framework, the Huber loss is used as in (\ref{eq:Huber2}). We compare the performance of the gradient based Huber loss clustering and the Huber based method from~\cite{Huber_clust}. The authors in~\cite{Huber_clust} consider a method that is based on a fixed-point iteration, given by the recursion 
\begin{align*}
    x_{t+1}(i) &= \frac{\sum_{y \in \underline{C_t}(i)}p_y y + \sum_{y \in \overline{C_t}(i)}\frac{\delta}{\|x_t(i) - y\|}p_y y}{\sum_{y \in \underline{C_t}(i)}p_y + \sum_{y \in \overline{C_t}(i)}\frac{\delta}{\|x_t(i) - y\|}p_y},
\end{align*} where $\overline{C_t}(i) = \{y \in C_t(i): \|x_t(i) - y\| > \delta \} $, $\underline{C_t}(i) = \{y \in C_t(i): \| x_t(i) - y \| \leq \delta \}$. The authors also suggest initializing the method by doing one round of Lloyd's algorithm from a random starting point. For fairness of comparison, we initialize both the gradient Huber and the method from~\cite{Huber_clust} (which we refer to as "Huber" in the figures) in this way.  

As in the previous experiment, we report the average results over 20 runs, along with the standard deviation. We consider the effects of changing the percentage of noisy samples and changing the variance of the noise. In all the experiments, we fix the Huber loss parameter to $\delta = 10$ for MNIST and $\delta = 5$ for the Iris dataset. We use the same step-size as in the standard $K$-means case, i.e., $\alpha = \frac{1}{N}$. The results for MNIST and Iris datasets are presented in Figures~\ref{fig:perc} and~\ref{fig:iris_perc}, respectively. 

\begin{figure}[ht]
\centering
\begin{tabular}{ll}
\includegraphics[scale=0.24]{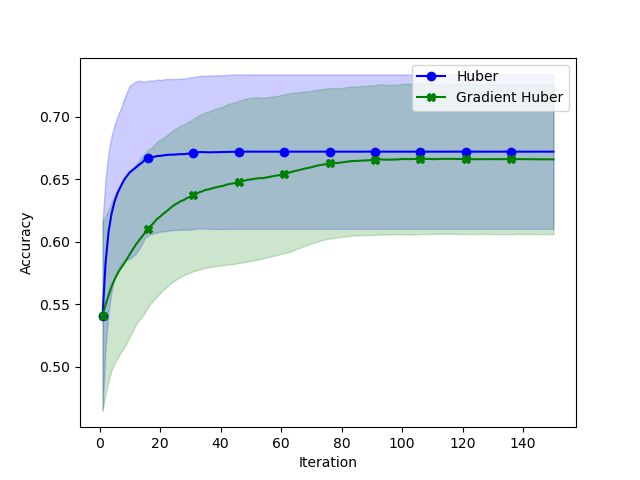}
&
\includegraphics[scale=0.24]{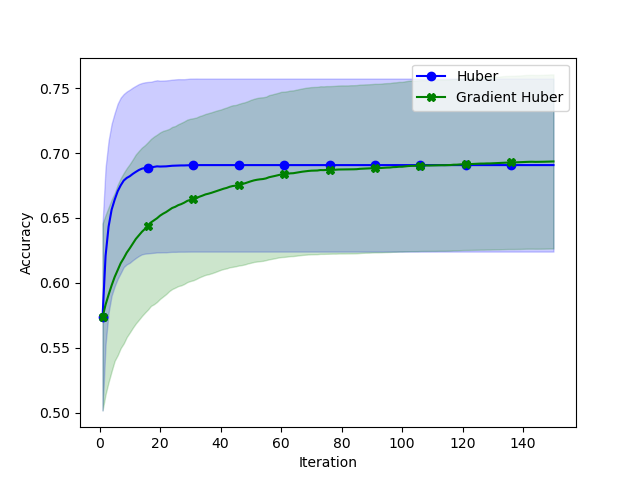}
\\
\includegraphics[scale=0.24]{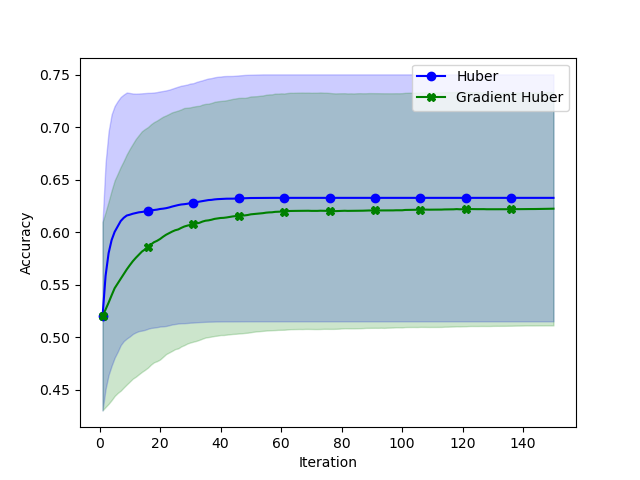}
&
\includegraphics[scale=0.24]{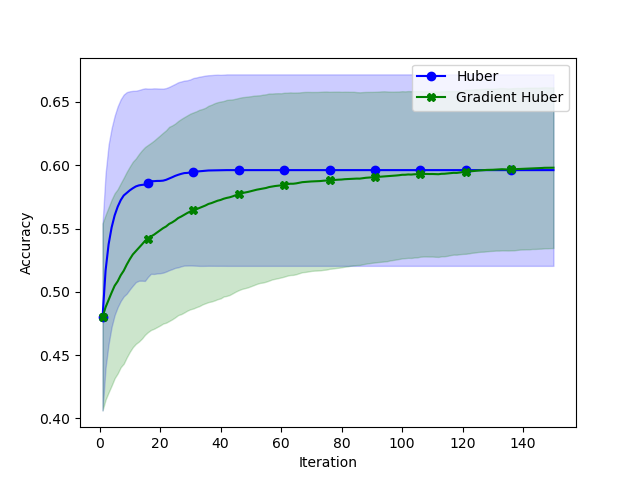}
\end{tabular}
\caption{Performance of Huber loss gradient vs the method from~\cite{Huber_clust} on MNIST data. The rows correspond to percentage of noisy samples being $10\%$ and $20\%$, with columns corresponding to variance of noise being $1$ and $2$, respectively (e.g., the upper left image corresponds to $10\%$ of noisy samples, with variance $1$, etc.).}
\label{fig:perc}
\end{figure}

\begin{figure}[ht]
\centering
\begin{tabular}{ll}
\includegraphics[scale=0.24]{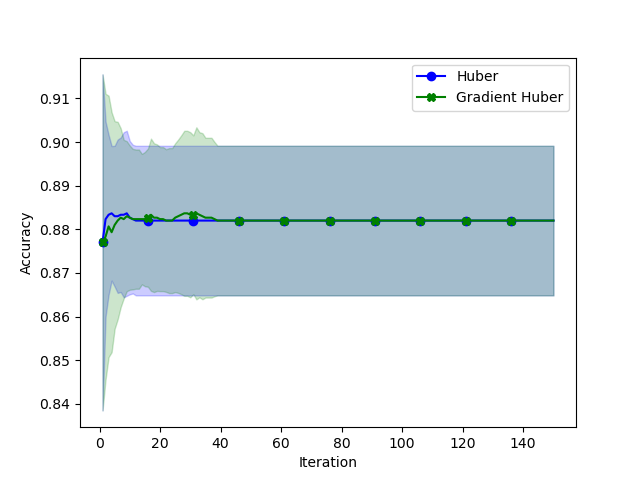}
&
\includegraphics[scale=0.24]{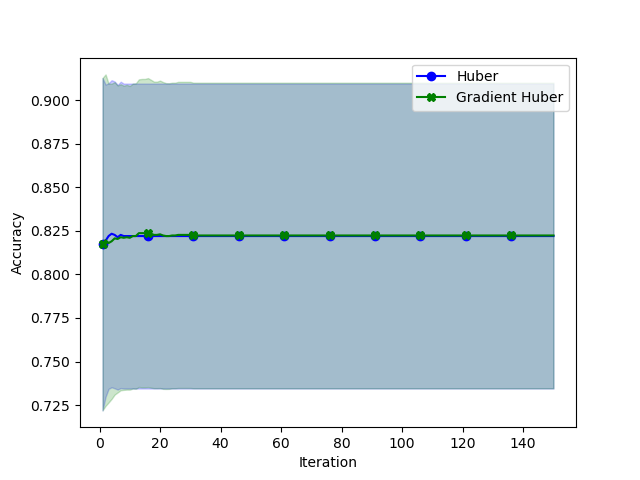}
\\
\includegraphics[scale=0.24]{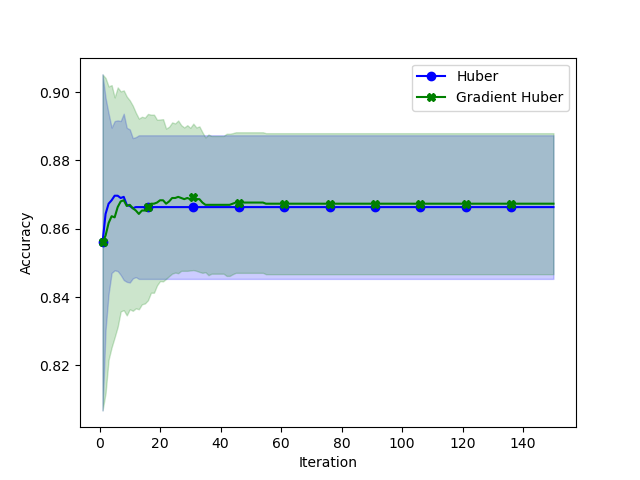}
&
\includegraphics[scale=0.24]{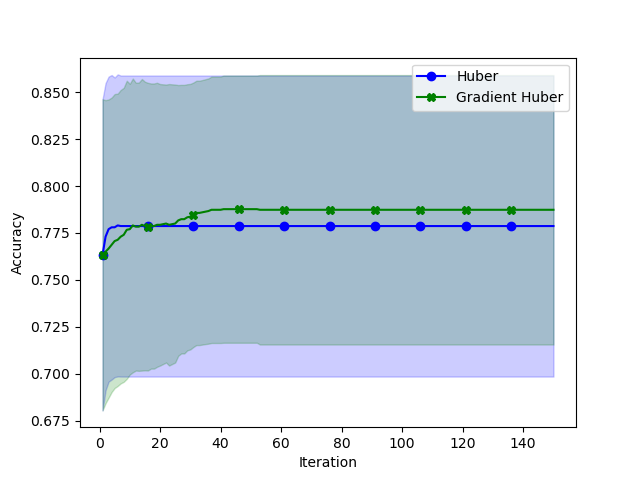}
\end{tabular}
\caption{Performance of Huber loss gradient vs the method from~\cite{Huber_clust} on Iris data. The rows correspond to percentage of noisy samples being $10\%$ and $20\%$, with columns corresponding to variance of noise being $1$ and $2$, respectively (e.g., the upper left image corresponds to $10\%$ of noisy samples, with variance $1$, etc.).}
\label{fig:iris_perc}
\end{figure}

Figure \ref{fig:perc} shows the performance of the Huber loss gradient method vs the method from~\cite{Huber_clust}, when the percentage of noisy samples and variance of noise vary. Comparing the rows, i.e., different percentage of noisy samples, we can see that both methods perform better when the percentage of noise is lower, as expected. Comparing the columns, i.e., different variance levels, we can see that our method is comparable to~\cite{Huber_clust} for variance $1$, but slightly outperforms the competing method for variance $2$. Therefore our method exhibits a similar or better performance, with a small loss in speed. However, our method provides much better convergence guarantees, as it provably converges for arbitrary initialization, while the method~\cite{Huber_clust} provides only local convergence guarantees, when already in a neighborhood of the stationary point. Figure \ref{fig:iris_perc} shows the performance of the Huber loss gradient method vs the method from~\cite{Huber_clust}, when the percentage of noisy samples and variance of noise vary. The step-size was the same as in the standard gradient $K$-means case. Comparing the rows, i.e., different percentage of noisy samples, we can see that both methods perform identically both accuracy and speed-wise, when the percentage of noisy samples is lower. However, the gradient based Huber method outperforms~\cite{Huber_clust} when the percentage of noisy samples is higher, more significantly when the variance is higher as well (bottom right image). Comparing the columns, i.e., different variance levels, we can see that both methods perform better when the variance of noise is lower.

\section{Conclusion}\label{sec:conclusion}

We proposed an approach to clustering, based on the gradient of a generic loss function, that measures clustering quality with respect to cluster assignments and cluster center positions. The approach is based on a formulation of the clustering problem that unifies the previously proposed distance based clustering approaches. The main advantage of the algorithm, compared to the standard approaches is its applicability to a wide range of clustering problems, low computational cost, as well as the ease of implementation. We prove that the sequence of centers generated by the algorithm converges to an appropriately defined fixed point, under arbitrary center initialization. We further analyze the type of fixed points our algorithm converges to, and show consistency with prior works, in case the cost is a Bregman divergence. Most notably, the assumed generic formulation includes loss functions beyond Bregman divergences (such as the Huber loss), for which the $K$-means-type averaging cluster center update step is not appropriate, 
while the step that corresponds to exact minimization with respect to the loss is computationally expensive. To combat these challenges, the proposed method involves a single gradient step with respect to the loss to update cluster centers. Numerical experiments illustrate and corroborate the results.

\bibliographystyle{IEEEtran}
\bibliography{IEEEabrv,bibliography}

\appendix

In this section we show some techinical results used in the paper. The next lemma is taken from \cite{lectures_on_cvxopt}. For the sake of completeness, we provide the proof here.

\begin{lemma}\label{lm:co-coerc}
Let $f: \mathbb{R}^d \mapsto \mathbb{R}$ be convex and have Lipschitz continuous gradients. Then, $f$ has co-coercive gradients.  
\end{lemma}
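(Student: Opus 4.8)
The plan is to prove Lemma~\ref{lm:co-coerc} via the standard convex-analysis argument using the function $\phi_z(x) := f(x) - \langle \nabla f(z), x \rangle$, which is itself convex (as a sum of the convex $f$ and a linear function) and has $L$-Lipschitz gradients. The key observation is that $\phi_z$ is minimized at $x = z$, since $\nabla \phi_z(z) = \nabla f(z) - \nabla f(z) = 0$ and convexity turns this stationarity into global minimality. First I would record the descent-lemma consequence of $L$-Lipschitz gradients: for any differentiable $h$ with $L$-Lipschitz gradient, $h(u) \le h(v) + \langle \nabla h(v), u - v \rangle + \frac{L}{2}\|u-v\|^2$ for all $u,v$. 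Applying this to $h = \phi_z$ and then minimizing the right-hand side over $u$ (the minimizer of $v \mapsto \langle \nabla \phi_z(v), u-v\rangle + \frac{L}{2}\|u-v\|^2$ is $u = v - \frac{1}{L}\nabla \phi_z(v)$) yields $\phi_z(z) = \min_u \phi_z(u) \le \phi_z(v) - \frac{1}{2L}\|\nabla \phi_z(v)\|^2$.

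Next I would unwind this inequality. Taking $v = y$ and recalling $\phi_z(y) = f(y) - \langle \nabla f(z), y\rangle$, $\phi_z(z) = f(z) - \langle \nabla f(z), z\rangle$, and $\nabla \phi_z(y) = \nabla f(y) - \nabla f(z)$, the bound reads
\begin{equation*}
    f(z) - \langle \nabla f(z), z \rangle \le f(y) - \langle \nabla f(z), y \rangle - \frac{1}{2L}\|\nabla f(y) - \nabla f(z)\|^2,
\end{equation*}
i.e.
\begin{equation*}
    f(z) \le f(y) + \langle \nabla f(z), z - y \rangle - \frac{1}{2L}\|\nabla f(y) - \nabla f(z)\|^2.
\end{equation*}
Then I would write the symmetric inequality, obtained by swapping the roles of $y$ and $z$ (using the function $\phi_y$ instead),
\begin{equation*}
    f(y) \le f(z) + \langle \nabla f(y), y - z \rangle - \frac{1}{2L}\|\nabla f(y) - \nabla f(z)\|^2,
\end{equation*}
and add the two inequalities. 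The function values cancel, leaving $0 \le \langle \nabla f(z) - \nabla f(y), z - y \rangle - \frac{1}{L}\|\nabla f(y) - \nabla f(z)\|^2$, which rearranges to exactly the co-coercivity statement $\langle \nabla f(y) - \nabla f(z), y - z \rangle \ge \frac{1}{L}\|\nabla f(y) - \nabla f(z)\|^2$.

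I do not anticipate a genuine obstacle here — this is a textbook result — but the one step deserving care is the minimization of the quadratic upper bound: one must justify that $\min_u \phi_z(u) = \phi_z(z)$ (which uses convexity of $\phi_z$ together with $\nabla\phi_z(z)=0$) and separately that the descent lemma lets us bound $\min_u \phi_z(u)$ from above by plugging in the specific point $u = v - \frac{1}{L}\nabla\phi_z(v)$ and simplifying $\phi_z(v) + \langle \nabla \phi_z(v), -\frac{1}{L}\nabla\phi_z(v)\rangle + \frac{L}{2}\cdot\frac{1}{L^2}\|\nabla\phi_z(v)\|^2 = \phi_z(v) - \frac{1}{2L}\|\nabla\phi_z(v)\|^2$. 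Combining these two facts is what gives the crucial inequality $\phi_z(z) \le \phi_z(v) - \frac{1}{2L}\|\nabla\phi_z(v)\|^2$; everything after that is bookkeeping.
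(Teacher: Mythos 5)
Your proposal is correct and follows essentially the same route as the paper's proof: the same auxiliary function $\phi_z(x)=f(x)-\langle\nabla f(z),x\rangle$, the same observation that $z$ minimizes it, the same quadratic lower bound at the minimizer, and the same symmetrization-and-addition step. The only difference is that you derive the intermediate inequality $\phi_z(z)\le\phi_z(v)-\tfrac{1}{2L}\|\nabla\phi_z(v)\|^2$ from the descent lemma, whereas the paper cites it from a reference.
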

\begin{proof}
Define the function:
\begin{equation*}
    \phi_x(z) = f(z) - \big\langle \nabla f(x),z \big\rangle.
\end{equation*} It is straightforward to see that $\phi_x$ maintains convexity, for any $x \in \R^d$. It then follows that the point $x$ is a minimizer of $\phi_x$. Next, we use the following lower-bound for functions with Lipschitz continuous gradients (the proof can be found in \cite{lectures_on_cvxopt}):
\begin{equation}\label{eq:lower-b}
    \frac{1}{2L}\big\|\nabla f(x) \big\|^2 \leq f(x) - f(x^*), 
\end{equation} where $x^*$ is a minimizer of $f$. Substituting $\phi_x$ in equation (\ref{eq:lower-b}), we get
\begin{align*}
    \begin{aligned}
        \phi_x(y) - \phi_x(x) &= f(y) - \big\langle\nabla f(x),y\big\rangle - f(x) + \big\langle \nabla f(x),x\big\rangle \\ &\geq \frac{1}{2L}\|\nabla \phi_x(y)\|^2 = \frac{1}{2L}\big\| \nabla f(y) - \nabla f(x) \big\|^2. 
    \end{aligned}
\end{align*} Applying the same steps to $\phi_y$, and summing the resulting inequalities, gives the desired result. 
\end{proof}

The following lemma shows that Huber loss satisfies Assumptions \ref{asmpt:g&f}-\ref{asmpt:co-coerc}.

\begin{lemma}\label{lm:Huber}
Huber loss-based cost satisfies Assumptions \ref{asmpt:g&f}-\ref{asmpt:co-coerc}.
\end{lemma}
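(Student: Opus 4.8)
The plan is to verify the three assumptions in turn for the Huber-based cost, where $f(x,y) = \phi_\delta(\|x-y\|)$ and $g(x,y) = \|x-y\|$, with $\phi_\delta$ as in~\eqref{eq:Huber}. Throughout I will use the explicit gradient formula
\begin{equation*}
    \nabla_x f(x,y) = \begin{cases} x - y, & \|x-y\| \leq \delta \\ \delta \frac{x-y}{\|x-y\|}, & \|x-y\| > \delta \end{cases},
\end{equation*}
which shows that $\nabla_x f(x,\cdot)$ is exactly the gradient-clipping map applied to $x - y$.

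First, Assumption~\ref{asmpt:g&f} is immediate: since $\phi_\delta$ is non-decreasing on $\R_+$ (it is the integral of $\min\{t,\delta\}$), $g(x,y) \leq g(z,y)$ gives $\phi_\delta(\|x-y\|) \leq \phi_\delta(\|z-y\|)$, i.e.\ $f(x,y) \leq f(z,y)$. Second, for Assumption~\ref{asmpt:coerc}, I note that for $\|x-y\| > \delta$ we have $f(x,y) = \delta\|x-y\| - \tfrac{\delta^2}{2} \to +\infty$ as $\|x\| \to \infty$ (with $y$ fixed), since $\|x-y\| \geq \|x\| - \|y\|$; so $f$ is coercive in its first argument. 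These two steps are routine.

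The main work is Assumption~\ref{asmpt:co-coerc}. By Lemma~\ref{lm:J_co-coerc} it suffices to show $f(\cdot, y)$ has co-coercive gradients in the first argument for each fixed $y$; equivalently, since $f(\cdot,y)$ is a composition $x \mapsto \phi_\delta(\|x - y\|)$, it suffices to show the map $h(x) := \phi_\delta(\|x\|)$ has co-coercive gradient, i.e.\ $\langle \nabla h(x) - \nabla h(z), x - z\rangle \geq \tfrac{1}{L}\|\nabla h(x) - \nabla h(z)\|^2$ for a suitable $L$. The cleanest route is to invoke Lemma~\ref{lm:co-coerc}: $h$ is convex (composition of the convex non-decreasing $\phi_\delta$ with the convex norm) and has $1$-Lipschitz gradient. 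Convexity of $\phi_\delta$ and of $\|\cdot\|$, with $\phi_\delta$ non-decreasing, gives convexity of $h$. For the Lipschitz bound, I would show $\|\nabla h(x) - \nabla h(z)\| \leq \|x - z\|$ directly from the formula: $\nabla h(x) = x$ when $\|x\| \leq \delta$ and $\nabla h(x) = \delta x/\|x\|$ when $\|x\| > \delta$, so $\nabla h$ is the Euclidean projection onto the ball of radius $\delta$ followed by nothing—precisely, $\nabla h(x) = \Pi_{B_\delta}(x)$ (projection onto $\{v : \|v\| \le \delta\}$), and projections onto convex sets are $1$-Lipschitz (non-expansive). Hence $h$ has $1$-Lipschitz gradient, Lemma~\ref{lm:co-coerc} applies with $L = 1$, and then $f(\cdot,y)$ has co-coercive gradient with constant $1$; Lemma~\ref{lm:J_co-coerc} lifts this to $J$ (the effective constant absorbing the measures $\mu_\D(C_i)$ as in its proof).

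The one point needing care—and the likely main obstacle—is the smoothness at the kink $\|x - y\| = \delta$: I should check that $\nabla_x f$ is genuinely continuous there (both branches give $x - y$ of norm $\delta$, so they agree), so that $f(\cdot,y)$ is $C^1$ and the projection identity $\nabla h = \Pi_{B_\delta}$ holds globally; this also confirms the paper's standing smoothness requirement is met. A secondary subtlety is that Lemma~\ref{lm:co-coerc} is stated for a function on $\R^d$, and here the relevant variable for a single cluster is $x(i) \in \R^d$ with $y$ a parameter, which is exactly that setting, so no generalization is needed. Assembling: Assumptions~\ref{asmpt:g&f},~\ref{asmpt:coerc} hold by direct inspection, Assumption~\ref{asmpt:met} holds because $g = \|\cdot-\cdot\|$ is the Euclidean metric, and Assumption~\ref{asmpt:co-coerc} follows from $\nabla h$ being a non-expansive map plus Lemmas~\ref{lm:co-coerc} and~\ref{lm:J_co-coerc}.
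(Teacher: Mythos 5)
Your proof is correct, and its overall architecture matches the paper's: Assumptions \ref{asmpt:g&f}--\ref{asmpt:coerc} by inspection, then convexity plus Lipschitz continuity of $\nabla_x f$, followed by Lemma \ref{lm:co-coerc} and Lemma \ref{lm:J_co-coerc} to obtain Assumption \ref{asmpt:co-coerc}. Where you genuinely diverge is the key Lipschitz step: the paper proves $\|\nabla_x f(x,y)-\nabla_x f(z,y)\|\le 2\|x-z\|$ by a three-case analysis ($\|x-y\|,\|z-y\|$ each above or below $\delta$), whereas you observe that $\nabla_x f(x,y)=\Pi_{B_\delta}(x-y)$, the Euclidean projection onto the ball of radius $\delta$, and invoke non-expansiveness of projections onto convex sets. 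This is cleaner and gives the sharper constant $L=1$ (tight, since $\nabla h$ is the identity inside the ball), versus the paper's looser $L=2$; since the admissible step size is $\alpha<2/L$, your constant permits a larger range of step sizes. Your justification of convexity (convex, non-decreasing $\phi_\delta$ composed with the convex norm) is also stated more carefully than the paper's bare ``composition of convex functions,'' which strictly speaking needs the monotonicity of the outer function. Your checks that the two gradient branches agree at $\|x-y\|=\delta$ and that Lemma \ref{lm:J_co-coerc} transfers the constant to $J$ are consistent with how the paper uses these facts, so nothing is missing.
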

\begin{proof}
Note that Huber loss is an increasing function on the domain of interest, $[0,+\infty)$. By definition, 
\begin{align*}
    g(x,y) &= \| x - y \|, \\
    f(x,y) &= \phi_{\delta}(g(x,y)), 
\end{align*} hence Assumptions \ref{asmpt:g&f} and \ref{asmpt:met} are satisfied. By the same argument, for a fixed $y$, we have
\begin{equation*}
    \lim_{\|x \| \rightarrow +\infty}f(x,y) = +\infty,
\end{equation*} satisfying Assumption \ref{asmpt:coerc}. 

Next, note that $f$ is a convex function, as a composition of convex functions. By Lemma \ref{lm:co-coerc}, it suffices to show that $f(x,y)$ has Lipschitz continuous gradients. The gradient of $f$ is given by 
\begin{align*}
    \nabla_x f(x,y) = \begin{cases}
        (x-y), &\|x - y\| \leq \delta \\
        \delta\frac{x-y}{\|x-y\|}, &\|x-y\| > \delta
    \end{cases}.
\end{align*} We differentiate between the following cases:
\begin{enumerate}
    \item $\|x - y\|, \|z - y\| \leq \delta$. We then have
    \begin{equation*}
        \| \nabla f(x,y) - \nabla f(z,y) \| = \| (x - y) - (z - y)\| = \|x - z\|.
    \end{equation*}
    \item $\|x - y\| \leq \delta, \|z - y \| > \delta$ \big(the case when $\|x - y\| > \delta, \|z - y\| \leq \delta$ is analogous\big). We then have
    \begin{align*}
        \| \nabla f(x,y) &- \nabla f(z,y) \| = \Big\| (x - y) - \frac{\delta}{\|z - y\|}(z - y)\Big\| \\ &= \Big\|(x - z) + \bigg(1 - \frac{\delta}{\|z - y\|}\bigg)(z - y)\Big\| \\ &\leq \|x - z\| + \bigg(1 - \frac{\delta}{\|z - y\|}\bigg)\|z - y\| \\ &= \|x - z\| + \|z - y\| - \delta.
    \end{align*} Next, using the triangle inequality and $\|x - y\| \leq \delta$, we get
    \begin{equation*}
        \|z - y\| \leq \|x - z\| + \|x - y\| \leq \|x - z\| + \delta.
    \end{equation*} Rearranging and substituting in the equation above, we get
    \begin{equation*}
        \| \nabla f(x,y) - \nabla f(z,y) \| \leq 2\|x - z\|.
    \end{equation*}
    \item $\|x - y\|, \|z - y\| > \delta$. Without loss of generality, assume $\|x - y\| \leq \|z - y\|$. We then have
    \begin{align*}
        \| \nabla f(x,y) &- \nabla f(z,y) \| = \delta \Big\|\frac{x - y}{\|x - y\|} - \frac{z - y}{\|z - y\|}\Big\| \\ &= \delta \Big\|\frac{x - y}{\|x - y\|} - \frac{z - y}{\|z - y\|} \pm \frac{x - y}{\|z - y\|}\Big\| \\ &\leq \delta \Big(\frac{1}{\|x-y\|} - \frac{1}{\|z - y\|}\Big)\|x - y\| \\ &+ \delta\frac{\|x - z\|}{\|z - y\|} \leq \delta \frac{\|z - x\|}{\|z - y\|} + \delta \frac{\|z - x\|}{\|z - y\|} \\ &\leq 2\|z - x\|,
    \end{align*} where we use the triangle inequality and $\|x - y \| \leq \|z - y\|$ in the first inequality, while the last inequality stems from $\|z - y\| > \delta$.
\end{enumerate} Hence, we have shown that, $\forall x, y, z \in \R^d$
\begin{equation*}
    \| \nabla f(x,y) - \nabla f(z,y) \| \leq 2\|x - z\|. 
\end{equation*} By Lemma \ref{lm:co-coerc}, we see that Assumption \ref{asmpt:co-coerc} is satisfied, thus proving the claim.
\end{proof}

The following lemma shows that Jensen-Shannon divergence satisfies Assumptions \ref{asmpt:g&f}-\ref{asmpt:co-coerc}, on a properly defined support.
\begin{lemma}\label{lm:JS-div}
Let $P_{\epsilon} \subset \R^d$, for some $\epsilon > 0$, define the restricted probability simplex, i.e.
\begin{equation}\label{eq:rest_prob_simplex}
    P_{\epsilon} = \Big\{p \in \R^d: \sum_{i = 1}^d p_i = 1, \: \epsilon \leq p_i < 1 \Big\}.
\end{equation} Then, the Jensen-Shannon divergence based cost satisfies Assumptions \ref{asmpt:g&f}-\ref{asmpt:co-coerc} on $P_{\epsilon}$.
\end{lemma}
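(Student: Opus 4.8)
The plan is to instantiate the general framework with the cost $f(x,y) = \mathrm{JSD}(x,y)$, the Jensen--Shannon divergence between $x, y \in P_\epsilon$, and the companion metric $g(x,y) = \sqrt{\mathrm{JSD}(x,y)}$, and then to verify Assumptions~\ref{asmpt:g&f}--\ref{asmpt:co-coerc} one at a time on $P_\epsilon$. Assumptions~\ref{asmpt:g&f} and~\ref{asmpt:met} are essentially free once $f = g^2$: the map $t \mapsto t^2$ is nondecreasing on $[0,+\infty)$, so $g(x,y) \le g(z,y)$ forces $f(x,y) \le f(z,y)$, which is Assumption~\ref{asmpt:g&f}; and Assumption~\ref{asmpt:met} is the known fact (see \cite{bregman_triangle}, \cite{chen_bregman_metrics}) that $\sqrt{\mathrm{JSD}}$ is a metric on the probability simplex, a property inherited by its restriction to $P_\epsilon$. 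Assumption~\ref{asmpt:coerc} holds vacuously, since $P_\epsilon$ is bounded (indeed $\|x\| \le 1$ for all $x \in P_\epsilon$, as $x_i \le 1$ implies $x_i^2 \le x_i$), so no sequence in $P_\epsilon$ can have $\|x\| \to +\infty$.

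The substantive work is Assumption~\ref{asmpt:co-coerc}. By Lemma~\ref{lm:J_co-coerc} it suffices to show that $f(\cdot, y)$ has co-coercive gradients on $P_\epsilon$ for each fixed $y$, and by Lemma~\ref{lm:co-coerc} it in turn suffices to show that $f(\cdot, y)$ is convex and has Lipschitz continuous gradients on the convex set $P_\epsilon$ (which is convex as the intersection of a hyperplane with half-spaces). Using the representation $\mathrm{JSD}(x,y) = H\!\big(\tfrac{x+y}{2}\big) - \tfrac12 H(x) - \tfrac12 H(y)$ with $H(p) = -\sum_i p_i \log p_i$ the Shannon entropy, a direct coordinatewise computation gives
\begin{equation*}
  \big[\nabla_x f(x,y)\big]_i = \tfrac12 \log\frac{2 x_i}{x_i + y_i}, \qquad \big[\nabla_x^2 f(x,y)\big]_{ii} = \frac{y_i}{2\, x_i (x_i + y_i)},
\end{equation*}
with vanishing off-diagonal Hessian entries. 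On $P_\epsilon$ one has $x_i \ge \epsilon$, $y_i \ge \epsilon$, $y_i < 1$, whence $x_i(x_i + y_i) \ge 2\epsilon^2$ and therefore
\begin{equation*}
  0 < \big[\nabla_x^2 f(x,y)\big]_{ii} \le \frac{1}{4\epsilon^2} .
\end{equation*}
Hence $f(\cdot, y)$ is strictly convex and $\tfrac{1}{4\epsilon^2}$-smooth on $P_\epsilon$; Lemma~\ref{lm:co-coerc} then yields co-coercivity of $\nabla_x f(\cdot, y)$, and Lemma~\ref{lm:J_co-coerc} lifts this to co-coercivity of $\nabla_x J$, which is Assumption~\ref{asmpt:co-coerc}.

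I expect the Lipschitz-gradient bound to be the only genuine obstacle, and it is exactly the reason the support must be the \emph{restricted} simplex $P_\epsilon$ rather than the full simplex: the Hessian entry $\tfrac{y_i}{2 x_i (x_i + y_i)}$ blows up as $x_i \downarrow 0$, so $\nabla_x \mathrm{JSD}(\cdot, y)$ fails to be Lipschitz near the boundary. Keeping all coordinates bounded below by $\epsilon$ is precisely what produces a finite smoothness constant (of order $1/\epsilon^2$); the remaining verifications are elementary and, for co-coercivity, delegated to the already-proved Lemmas~\ref{lm:co-coerc} and~\ref{lm:J_co-coerc}.
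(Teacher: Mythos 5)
Your proof is correct, and it follows the paper's overall structure: Assumptions~\ref{asmpt:g&f} and \ref{asmpt:met} from the fact that $\sqrt{D_{JS}}$ is a metric with $f=g^2$, Assumption~\ref{asmpt:coerc} vacuous on the bounded set $P_\epsilon$, and Assumption~\ref{asmpt:co-coerc} via convexity plus Lipschitz gradients and Lemmas~\ref{lm:co-coerc} and \ref{lm:J_co-coerc}. The one place you genuinely diverge is the smoothness step: the paper estimates the gradient difference directly, writing $\frac{\partial}{\partial x_i}D_{JS}(y\parallel x)-\frac{\partial}{\partial z_i}D_{JS}(y\parallel z)=\frac12\big(\log\frac{x_i}{z_i}+\log\frac{z_i+y_i}{x_i+y_i}\big)$ and using $\log u\le u-1$ to get the coordinatewise bound $\frac{1}{\epsilon}|x_i-z_i|$, i.e., Lipschitz constant $1/\epsilon$; you instead bound the (diagonal) Hessian on the convex set $P_\epsilon$, which gives the constant $1/(4\epsilon^2)$. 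Both are valid since the assumption only requires some finite $L$ (the paper's constant is sharper for small $\epsilon$; in fact your own Hessian formula gives the better bound $\frac{y_i}{2x_i(x_i+y_i)}\le\frac{1}{2x_i}\le\frac{1}{2\epsilon}$ if you drop $y_i$ against one factor of $x_i+y_i$). Incidentally, your second derivative $\frac{y_i}{2x_i(x_i+y_i)}$ is the correct one; the paper's displayed expression $\frac{y_i}{2(x_i+y_i)}$ is missing the factor $x_i$, a typo that does not affect its argument since the Lipschitz bound there is derived without the Hessian. Your explicit invocation of Lemma~\ref{lm:J_co-coerc} to lift co-coercivity from $f$ to $J$ is also slightly more careful than the paper, which leaves that lifting implicit.
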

\begin{proof}
By the definition of Jensen-Shannon divergence, we have
\begin{equation*}
    D_{JS}(y \parallel x) = \frac{1}{2}D_{KL}(y \parallel m) + \frac{1}{2}D_{KL}(x \parallel m),
\end{equation*} where $m = \frac{x+y}{2}$, and $D_{KL}(\cdot \parallel \cdot)$ is the Kullback-Leibler divergence, defined by
\begin{equation*}
    D_{KL}(x \parallel y) = \sum_{i = 1}^d x_i\log \frac{x_i}{y_i}.
\end{equation*} It is shown in \cite{bregman_triangle} that the Jensen-Shannon divergence represents the square of a metric. Therefore, for 
\begin{align*}
    g(x,y) &= \sqrt{D_{JS}(y \parallel x)}, \\
    f(x,y) &= D_{JS}(y \parallel x),
\end{align*} Assumptions \ref{asmpt:g&f} and \ref{asmpt:met} are satisfied. Since the domain of interest, given by (\ref{eq:rest_prob_simplex}) is bounded, Assumption \ref{asmpt:coerc} is not of interest. 

 We next show that $D_{JS}$ is convex and has Lipschitz continuous gradients on $P_{\epsilon}$. A basic computation yields that the partial derivative of $D_{JS}$, with respect to $x_i$, is given by
\begin{equation}\label{eq:JS_part}
    \frac{\partial}{\partial x_i}D_{JS}(y \parallel x) = \frac{1}{2}\log \frac{2x_i}{x_i + y_i}.
\end{equation} It is then straightforward to see that the Hessian of $D_{JS}$ is a diagonal matrix, whose $i$-th diagonal element is given by
\begin{equation}\label{eq:DJ_hess}
    \frac{\partial^2}{\partial x_i^2}D_{JS}(y \parallel x) = \frac{y_i}{2(x_i + y_i)}.
\end{equation} Since $x, y \in P_{\epsilon}$, the expression in (\ref{eq:DJ_hess}) is positive, hence $D_{JS}$ is convex on $P_{\epsilon}$. Next, from (\ref{eq:JS_part}), for any $x,y,z \in P_{\epsilon}$, we have  
\begin{align*}
    \Big|\frac{\partial}{\partial x_i}D_{JS}(y \parallel x) - \frac{\partial}{\partial z_i}&D_{JS}(y \parallel z)\Big| =  \frac{1}{2}\Big| \log \frac{x_i}{z_i} + \log \frac{z_i + y_i}{x_i + y_i} \Big| \\ &\leq \max\Big\{\Big|\log \frac{x_i}{z_i} \Big|, \Big| \log \frac{z_i + y_i}{x_i + y_i} \Big| \Big\}.
\end{align*} Without loss of generality, assume $x_i \geq z_i$. We then have
\begin{equation*}
    \Big | \log \frac{x_i}{z_i} \Big | = \log \frac{x_i}{z_i} \leq \frac{x_i}{z_i} - 1 = \frac{x_i - z_i}{z_i} \leq \frac{1}{\epsilon}(x_i - z_i), 
\end{equation*} and
\begin{align*}
    \Big | \log \frac{z_i + y_i}{x_i + y_i} \Big | &= \log \frac{x_i + y_i}{z_i + y_i} \leq \frac{x_i + y_i}{z_i + y_i} - 1 \\ &= \frac{x_i - z_i}{z_i + y_i} \leq \frac{1}{\epsilon}(x_i - z_i), 
\end{align*} where we used $\log x \leq x - 1$ in the above inequalities. Hence, we have shown that
\begin{equation*}
    \Big|\frac{\partial}{\partial x_i}D_{JS}(y \parallel x) - \frac{\partial}{\partial z_i}D_{JS}(y \parallel z)\Big| \leq \frac{1}{\epsilon}|x_i - z_i|.
\end{equation*} By definitions of the gradient and norm, it then follows that
\begin{equation*}
    \Big\|\nabla_x D_{JS}(y \parallel x) - \nabla_z D_{JS}(y \parallel z)\Big\| \leq \frac{1}{\epsilon}\|x - z\|,
\end{equation*} which shows Lipschitz continuity of the gradients of $D_{JS}$ on $P_{\epsilon}$. Hence, by Lemma \ref{lm:co-coerc}, $D_{JS}$ satisfies Assumption \ref{asmpt:co-coerc} on $P_{\epsilon}$.
\end{proof}

\begin{remark}
Note that in general, Jensen-Shannon divergence does not satisfy Assumptions \ref{asmpt:coerc} and \ref{asmpt:co-coerc}. However, in certain problems, where the restricted probability simplex of the form (\ref{eq:rest_prob_simplex}) is a natural domain of choice, the Jensen-Shannon divergence can be applied in our framework. One such example is soft clustering under uncertainty - where no class can be ruled out with certainty, nor can a point belonging to any class be taken with certainty. Hence, for an appropriately selected $\epsilon$, the restricted probability simplex (\ref{eq:rest_prob_simplex}) represents a natural domain.
\end{remark}

\end{document}